\documentclass{article} 

\usepackage{iclr2027_conference,times}

\usepackage{amsmath,amsfonts,bm}

\def\eqref#1{equation~\ref{#1}}

\def\1{\bm{1}}

\DeclareMathAlphabet{\mathsfit}{\encodingdefault}{\sfdefault}{m}{sl}
\SetMathAlphabet{\mathsfit}{bold}{\encodingdefault}{\sfdefault}{bx}{n}

\DeclareMathOperator*{\argmax}{arg\,max}

\usepackage{kotex}

\usepackage{amsmath}
\usepackage{amssymb}
\usepackage{mathtools}
\usepackage{amsthm}
\usepackage{bm}
\usepackage{tabularx}
\usepackage{enumitem}

\usepackage{array}
\usepackage{booktabs}
\usepackage{multirow}
\usepackage{graphicx}
\usepackage{wrapfig}
\usepackage[most]{tcolorbox}
\usepackage{enumitem}
\usepackage{adjustbox}

\usepackage[table]{xcolor}
\definecolor{paleblue}{RGB}{70,120,180}

\usepackage{hyperref}
\usepackage{url}
\hypersetup{
    colorlinks=true,
    citecolor=paleblue,
    linkcolor=paleblue,
    urlcolor=paleblue
}

\newtheorem{theorem}{Theorem}
\newtheorem{proposition}[theorem]{Proposition}

\newcommand{\lladainst}{\mbox{LLaDA-8B-Instruct}}
\newcommand{\lladaonefive}{\mbox{LLaDA-1.5}}

\newcommand{\egi}{\text{EGI}}
\newcommand{\jg}{\text{JustGRPO}}
\newcommand{\spg}{\text{SPG}}
\newcommand{\dtwo}{\text{d2}}

\usepackage[capitalize]{cleveref}

\crefname{section}{Sec.}{Secs.}
\Crefname{section}{Sec.}{Secs.}

\crefname{figure}{Fig.}{Figs.}
\Crefname{figure}{Fig.}{Figs.}

\crefname{table}{Tab.}{Tabs.}
\Crefname{table}{Tab.}{Tabs.}

\title{Low-Confidence Remasking Traps Flexibility:\\
Realizing Arbitrary-Order Potential for\\
Diverse Rollouts in Diffusion LLMs}

\author{
Moongyu Jeon\textsuperscript{1 *} \quad
Dongjae Jeon\textsuperscript{1,2 *} \quad
Bumjun Kim\textsuperscript{1} \quad
Mingyu Kim\textsuperscript{3 \textdagger} \quad
Albert No\textsuperscript{1 \textdagger}
\\[0.8em]
\multicolumn{1}{c}{
\normalfont
\textsuperscript{1}Yonsei University
\quad\;
\textsuperscript{2}KRAFTON AI
\quad\;
\textsuperscript{3}Kookmin University
}
}

\iclrfinalcopy

\begin{document}

\maketitle
\lhead{Preprint}

\makeatletter

\renewcommand{\footnoterule}{%
  \kern-3pt
  \hrule width \textwidth
  \kern 2.6pt
}

\begingroup
\renewcommand{\thefootnote}{}
\renewcommand{\@makefntext}[1]{\noindent #1}

\footnotetext{%
\textsuperscript{*}Equal contribution.\\
\textsuperscript{\textdagger}Correspondence to Mingyu Kim
(mgyukim@kookmin.ac.kr) and Albert No
(albertno@yonsei.ac.kr).
}

\endgroup
\makeatother

\vspace{-1.5em}
\begin{abstract}
Masked diffusion language models support arbitrary-order generation, suggesting a natural way to produce diverse outputs. 
However, recent work argues that this flexibility reduces diversity by delaying high-uncertainty tokens that can lead to different generation paths. 
We trace this diversity loss not to arbitrary-order generation itself, but largely to \emph{low-confidence remasking} (LCR), a widely used decoding rule. 
At each step, LCR samples a token at every masked position but commits only the sampled token with the highest probability, filtering out the rest. 
We show that this mechanism can exponentially suppress lower-probability tokens as more positions compete, and observe the same suppression in LLaDA. 
In contrast, \emph{top-probability position selection} (TPP), which has often been conflated with LCR under the shared label \emph{confidence-based decoding}, avoids this diversity loss. 
TPP first selects the position whose most likely token has the highest probability, then samples directly from that position's distribution. 
Replacing LCR with TPP restores diversity and yields Pass@$k$ comparable to left-to-right decoding, suggesting that the reported diversity loss stems largely from LCR's filtering rather than from generating high-confidence positions first.
To further exploit order flexibility, we introduce \textbf{Entropy-Guided Initialization} (EGI), which samples the first token at the highest-entropy position and then follows TPP. 
This simple modification further improves rollout diversity and solution coverage beyond left-to-right decoding, with gains extending to downstream policy optimization, highlighting the potential of arbitrary-order generation for diverse rollouts.
\end{abstract}

\vspace{-1em}
\section{Introduction}
\label{sec:introduction}
Masked diffusion language models (MDMs) predict token distributions at all masked positions simultaneously~\citep{austin2021d3pm,lou2024sedd,sahoo2024mdlm,shi2024simplified,ou2025your}.
This enables both parallel decoding and arbitrary-order generation~\citep{wu2026fast,kim2025train}, unlike autoregressive models that generate tokens in a fixed left-to-right order~\citep{radford2018improving,radford2019language}.
In particular, this order flexibility appears naturally suited to diverse rollout sampling, where multiple responses to the same prompt explore different solution paths~\citep{gong2026diffucoder,ni2026flexibility}.
Such diversity can improve \emph{solution coverage}, measured by Pass@$k$: the probability that at least one of $k$ sampled responses is correct~\citep{chen2021humaneval}.

However, \citet{ni2026flexibility} report lower rollout diversity and Pass@$k$ under arbitrary-order decoding than under left-to-right decoding of the same MDM.
Their main comparison uses a widely adopted arbitrary-order decoding rule, \emph{low-confidence remasking} (LCR)~\citep{chang2022maskgit,nie2025large}.
They attribute this \emph{flexibility trap} to generating more certain tokens first and delaying high-uncertainty positions where token choices can lead to different reasoning paths.
As context accumulates, their uncertainty decreases, leaving fewer paths to explore.
Left-to-right decoding instead resolves uncertain positions as they arise, allowing sampling to explore different paths.

In this work, we challenge this interpretation of the flexibility trap: \emph{the reported diversity loss largely stems from how LCR filters sampled tokens, not arbitrary-order generation itself}.
For one-token decoding, LCR samples a token at every masked position but commits only the sampled token with the highest model probability.
An alternative decoding rule, \emph{top-probability position selection} (TPP)~\citep{kim2025train}, first chooses the position where the most likely token has the highest probability.
It then samples and commits a token at that position without further filtering.
Yet this distinction is often obscured in prior work, which refers to either rule as \emph{confidence-based decoding} and sometimes pairs TPP-style descriptions or analyses with LCR experiments (see App.~\ref{app:related-conflation}).

Two toy models reveal how LCR's token filtering can \emph{exponentially suppress} lower-probability tokens and outputs.
A lower-probability sampled token survives only if \emph{all} competing sampled tokens have equal or lower model probability.
In an independent-token toy model, this event becomes exponentially unlikely as the number of remaining masked positions grows.
TPP, in contrast, generates lower-probability tokens at the rate specified by the chosen temperature.
A second toy model with non-overlapping solution sequences shows analogous suppression of lower-probability outputs at the sequence level.
These examples demonstrate that LCR can severely restrict exploration even when temperature makes lower-probability tokens common among the initial samples.

We observe the same suppression in LLaDA~\citep{nie2025large}: LCR predominantly commits the most likely token at the selected position even as temperature increases.
Simply replacing LCR with TPP restores diversity and yields Pass@$k$ comparable to left-to-right decoding across benchmarks on LLaDA and LLaDA-1.5~\citep{zhu2026llada}.
Crucially, TPP still prioritizes high-confidence positions and can therefore postpone uncertain ones.
These results challenge the claim that delaying uncertain positions is the main cause of diversity loss and instead identify LCR's filtering of lower-probability sampled tokens across positions as a central limitation.
More broadly, this substantial behavioral gap makes their conflation potentially misleading, underscoring the need to distinguish LCR from TPP.

Finally, we show that changing position selection at just one decoding step can further improve rollout sampling beyond left-to-right decoding.
We introduce \emph{Entropy-Guided Initialization} (EGI), a training-free modification that changes only the first decoding step of TPP.
EGI first samples a token at the highest-entropy masked position, then follows TPP for the remaining steps.
This modification improves rollout diversity and solution coverage over left-to-right decoding, including higher Pass@$k$.
These benefits extend to downstream group-relative policy optimization, where replacing left-to-right rollouts with EGI rollouts improves performance across all three methods we evaluate~\citep{wang2026spg,wang2026d2,ni2026flexibility}.
Together, these results show that diverse rollout sampling need not be confined to left-to-right decoding and highlight the untapped potential of arbitrary-order generation as a design space for both inference and policy optimization.

Our contributions are summarized as follows:
\vspace{-0.5em}
\begin{itemize}[leftmargin=*]
    \item We distinguish two decoding rules often labeled \emph{confidence-based decoding}: low-confidence remasking (LCR) and top-probability position selection (TPP). They coincide under greedy decoding at zero temperature but define fundamentally different samplers at positive temperature, making this distinction essential for rollout sampling and their conflation potentially misleading.

    \item Using two toy models, we show that LCR's token filtering can \textit{exponentially suppress} lower-probability choices, and observe the same suppression in LLaDA. Replacing LCR with TPP largely closes the Pass@$k$ gap to left-to-right decoding, pointing to LCR's token filtering, rather than confidence-prioritized ordering itself, as a central source of the \emph{flexibility trap}.

    \item We introduce \emph{Entropy-Guided Initialization} (EGI), which modifies only the first step of TPP to use order flexibility for exploration. EGI improves rollout diversity and solution coverage beyond left-to-right decoding, with gains extending to downstream group-relative policy optimization, highlighting the potential of arbitrary-order generation for stronger rollout sampling.
\end{itemize}

\section{Preliminaries}
\label{sec:preliminaries}

\subsection{Masked Diffusion Language Models and Decoding Rules}
\label{sec:masked-diffusion-preliminaries}

Discrete diffusion models learn to generate discrete data by reversing a stochastic corruption process~\citep{hoogeboom2021argmax,austin2021d3pm,lou2024sedd}.
Masked diffusion language models (MDMs) use a forward process that progressively replaces tokens with a special mask token~\citep{sahoo2024mdlm,shi2024simplified,ou2025your}.
The model learns to reconstruct the original tokens from partially masked sequences and generates text by iteratively filling masked positions.
LLaDA~\citep{nie2025large} scales masked diffusion to an 8B-parameter large language model.

Given a partially masked sequence $\mathbf{x}$, let $M$ denote the set of masked positions and $p_i(v)$ the probability the model assigns to token $v\in\mathcal{V}$ at position $i\in M$.
An MDM predicts these distributions simultaneously, allowing tokens to be committed in parallel or in arbitrary orders~\citep{wu2026fast,kim2025train}.
To isolate decoding-rule effects, we focus on one-token-per-step decoding.

\paragraph{Decoding Rules.}
An MDM can choose a masked position to generate next and then sample a token at that position.
It can follow a left-to-right autoregressive (AR) order, always choosing the leftmost masked position, or select the next position adaptively based on its current predictions.\footnote{Block-autoregressive decoding generates contiguous blocks from left to right~\citep{arriola2025block,nie2025large}.
The position-selection criteria discussed here apply within each block.}
Common position-selection criteria favor the largest top-token probability, the largest margin between the two highest token probabilities~\citep{kim2025train}, or the lowest token entropy~\citep{ye2025dream}.
We call the first rule \emph{top-probability position selection} (TPP), which samples directly from a position's distribution after selecting it.
With consistent model conditionals, this sampling preserves the common joint distribution regardless of position order~\citep{kim2025train}.

Another widely used decoding rule is \emph{low-confidence remasking} (LCR)~\citep{chang2022maskgit,nie2025large}.
In our one-token-per-step setting, LCR first samples a token proposal $\tilde{x}_i$ at every masked position $i\in M$.
It then commits only the proposal with the highest model probability $p_i(\tilde{x}_i)$, leaving all other positions masked.\footnote{Here, ``remasking'' discards newly sampled tokens rather than masking previously committed tokens, as in methods such as ReMDM~\citep{wang2025remasking}.}
Thus, TPP selects a position before token sampling, whereas LCR selects among tokens that have already been sampled.
Despite this distinction, the term \emph{confidence-based decoding} has been used for both TPP and LCR across prior work.
We formalize these rules and examine how they differ under rollout sampling in Sec.~\ref{sec:tempered-decoding-rules}.

\subsection{Diverse Rollout Sampling}
\label{sec:rollout-sampling}

\textit{Rollout sampling} generates multiple responses to the same prompt and is one of the simplest forms of inference-time scaling.
Its effectiveness depends on whether the samples explore diverse, viable solution paths rather than repeatedly follow the same trajectory.
Such diversity can increase \emph{solution coverage}, the chance of finding at least one correct solution across multiple attempts.
This coverage is commonly measured by \textbf{Pass@$\bm{k}$}, the probability that at least one of $k$ sampled solutions is correct~\citep{chen2021humaneval,brown2024large}.

Rollout diversity is also important for Group-Relative Policy Optimization (GRPO), which samples multiple rollouts per prompt and learns from their relative rewards~\citep{shao2024deepseekmath}.
For binary rewards, groups containing both correct and incorrect rollouts provide relative learning signal, whereas identical-reward groups provide little.
By exploring different solution paths, diverse rollouts can increase the chance of within-group reward variation.
Pass@$k$ remains a common measure of such rollout coverage, while another useful metric is \textbf{Potential@$\bm{k}$}, which measures how often problems missed by a single rollout are recovered through additional sampling~\citep{yao2025diversity}.

Token temperature is one of the simplest and most common controls for rollout diversity.
Given a token distribution $p(v)$, temperature $T>0$ defines the tempered distribution
\[
p^{(T)}(v) \propto p(v)^{1/T},
\]
with $T=1$ recovering the original distribution.
As $T \to 0$, sampling becomes greedy and selects only the top-probability token, which we conventionally denote by $T=0$.
Increasing $T$ gives lower-probability tokens more mass, approaching uniform sampling as $T \to \infty$.

\section{Two Distinct Samplers Behind Confidence-based Decoding}
\label{sec:tempered-decoding-rules}

As noted in Sec.~\ref{sec:preliminaries}, both low-confidence remasking (LCR) and top-probability position selection (TPP)
have been referred to as confidence-based decoding or related terms, but they define different decoding rules.
This conflation has led to mismatches in prior work, where descriptions or analyses of TPP are sometimes paired with
experiments using LCR (see App.~\ref{app:related-conflation} for details).

To formalize the distinction, consider a single decoding step.
For a fixed partially decoded sequence, let $M$ denote the set of masked positions and $p_i(v)$ the model probability of token $v\in\mathcal V$ at position $i\in M$.
Let $p_i^{(T)}(v)\propto p_i(v)^{1/T}$ denote the tempered distribution at temperature $T>0$.

\textbf{Low-confidence remasking} (\textbf{LCR}) first independently samples a token proposal
$\tilde{x}_i\sim p_i^{(T)}$ at every masked position $i\in M$.
It scores each proposal by its probability $p_i(\tilde{x}_i)$ under the original model distribution
and selects the highest-scoring proposal:
$i^\star=\operatorname*{arg\,max}_{i\in M}p_i(\tilde{x}_i)$.
The selected proposal $\tilde{x}_{i^\star}$ is then committed to the sequence at position $i^\star$, while all other proposals are rejected.

\textbf{Top-probability position selection} (\textbf{TPP}) instead scores each masked position by its top-token probability, $c_i:=\max_{v\in\mathcal V}p_i(v)$.
It selects the highest-scoring position, $i^\star=\argmax_{i\in M} c_i$,
and then samples and immediately commits a token $x_{i^\star}\sim p_{i^\star}^{(T)}$ at that position.
All other positions in $M$ remain masked.
Notably, the committed token need not be the top token defining $c_{i^\star}$.

\noindent
\begin{minipage}[t]{0.495\linewidth}
\begin{tcolorbox}[
    enhanced,
    colback=black!4,
    colframe=black!4,
    boxrule=0pt,
    arc=1.5mm,
    left=1.8mm,
    right=1.8mm,
    top=2mm,
    bottom=2mm
]
\footnotesize
\textbf{[LCR] Sample all, then select an index}

\vspace{1.5mm}

\textit{1.\ Sample proposals:}
$\tilde{x}_i\sim p_i^{(T)},\;\forall i\in M$

\vspace{1mm}

\textit{2.\ Score and select:}
$i^\star=\arg\max_{i\in M}p_i(\tilde{x}_i)$

\vspace{1mm}

\textit{3.\ Commit:}
$x_{i^\star}=\tilde{x}_{i^\star}$
\end{tcolorbox}
\end{minipage}
\hfill
\begin{minipage}[t]{0.495\linewidth}
\begin{tcolorbox}[
    enhanced,
    colback=black!4,
    colframe=black!4,
    boxrule=0pt,
    arc=1.5mm,
    left=1.8mm,
    right=1.8mm,
    top=2mm,
    bottom=2mm
]
\footnotesize
\textbf{[TPP] Select a position, then sample}

\vspace{1.5mm}

\textit{1.\ Score positions:}
$c_i:=\max_{v\in\mathcal V}p_i(v),\;\forall i\in M$

\vspace{1mm}

\textit{2.\ Select a position:}
$i^\star=\arg\max_{i\in M}c_i$

\vspace{1mm}

\textit{3.\ Sample and commit:}
$x_{i^\star}\sim p_{i^\star}^{(T)}$
\end{tcolorbox}
\end{minipage}

Under greedy decoding at $T=0$, LCR and TPP coincide, committing the highest-probability token across all masked positions.
This equivalence helps explain why the two procedures can be conflated under deterministic decoding.
For rollout sampling at $T>0$, however, their difference affects the sampling distribution, not merely the decoding order.

\emph{TPP preserves the model distribution in principle.}
At $T=1$, TPP changes only which position is generated next, so if the model conditionals come from a common joint distribution, sampling order does not change the resulting joint distribution~\citep{kim2025train}.
For any $T>0$, TPP samples and commits directly from the tempered distribution at the selected position, following the token probabilities induced by the temperature without further rejection based on the sampled token.

\emph{LCR, by contrast, can exponentially suppress lower-probability tokens.}
A lower-probability proposal is committed only if \emph{every} competitor has lower model probability.
A single higher-probability proposal suffices to reject it, so its commitment probability rapidly decreases as more positions compete.
LCR therefore biases commitment toward higher-probability proposals, severely suppressing the lower-probability tokens introduced by the requested temperature~\citep{zhang2026differences}.

Sec.~\ref{sec:temperature-suppression-analysis} establishes this exponential suppression in toy models as the number of competing positions grows.
Sec.~\ref{sec:empirical-lcr} observes the corresponding pattern in large-scale MDMs (e.g., LLaDA), where commitments remain concentrated on top-probability tokens even as temperature increases.

\section{Analyzing Diversity Suppression in Low-Confidence Remasking}
\label{sec:temperature-suppression-analysis}

We use two toy models to isolate how LCR's rejection of lower-probability proposals affects rollout diversity.
The first considers i.i.d.\ tokens, where each decoding step leaves the remaining token distributions unchanged, and reveals the effect at the token level.
This setting closely matches the independent-position analysis of \citet{zhang2026differences}, but yields a sharper result: beyond entropy reduction, non-top commitments are exponentially suppressed under LCR.
The second considers fully non-overlapping sequences and extends the same mechanism to sequence-level outputs.
In both settings, we reduce the analysis to top-probability versus non-top choices and compare how LCR suppresses non-top choices while TPP follows their tempered probability.
See App.~\ref{app:related-flexibility} for detailed comparisons with prior work and App.~\ref{app:theory} for proofs and additional results.

\subsection{Toy Model I: Independent Tokens}
\label{sec:iid-kary}

\paragraph{Setup.}
We begin with the simplest i.i.d.\ setting, where every position independently follows the same strictly positive distribution $p$ over a finite vocabulary $\mathcal V$, with a unique top-probability token.
We generate a length-$L$ sequence from a fully masked state, committing one token per step.
Let $c_T:=\max_{v\in\mathcal V}p^{(T)}(v)$ denote the top-token probability under the tempered distribution.

\paragraph{Non-top tokens are exponentially suppressed under LCR.}
Consider a single decoding step with $m$ masked positions.
Under TPP, all positions have the same top-token probability, so any position can be selected and a token is sampled and committed directly from $p^{(T)}$.
A non-top token is therefore committed with probability $1-c_T$, directly reflecting the chosen temperature.
Under LCR, however, a non-top proposal is committed if and only if all $m$ positions propose non-top tokens, since a single top-token proposal is enough to reject it.
Thus, the probabilities of committing a non-top token under TPP and LCR are

\[
\boxed{
\mathbb{P}_{\mathrm{TPP}}(\text{non-top token})
=
1-c_T,
\quad
\mathbb{P}_{\mathrm{LCR}}(\text{non-top token})
=
(1-c_T)^m.
}
\]

TPP therefore commits non-top tokens at exactly the rate specified by temperature, whereas under LCR the same probability collapses exponentially as more positions compete.

\paragraph{The final non-top fraction vanishes under LCR.}
The stepwise suppression accumulates over the full sequence.
The expected fraction of non-top tokens in the final sequence satisfies
\[
\boxed{
\mathbb{E}_{\mathrm{TPP}}
\left[
\frac{N_{\text{non-top}}}{L}
\right]
=
1-c_T,
\quad
\mathbb{E}_{\mathrm{LCR}}
\left[
\frac{N_{\text{non-top}}}{L}
\right]
\leq
\frac{1-c_T}{Lc_T}
=
O\!\left(\frac{1}{L}\right),
}
\]
where $N_{\text{non-top}}$ is the number of non-top tokens in the final sequence.
Under TPP, non-top tokens appear in the final sequence at exactly the expected fraction $1-c_T$ specified by the tempered distribution.
Under LCR, however, the same temperature drives the non-top fraction to zero as $O(1/L)$, even though the tempered distribution continues to assign non-top tokens a fixed probability $1-c_T$.

\paragraph{Concrete Example.}
Tab.~\ref{tab:toy-examples} (Left) illustrates the gap between TPP and LCR for $L=128$ with 20 tokens, where the top-probability token has probability only $0.1$ and the other 19 equally share the remaining $0.9$.
As $T\to\infty$, the tempered distribution assigns $95\%$ probability to non-top tokens, which TPP preserves.
Despite this extreme flattening, the probability that LCR commits a non-top token at the first step falls to \textbf{$0.141\%$}, and the final non-top fraction is \textbf{$14.82\%$}.

\begin{table*}[t]
\centering

\caption{
\textbf{Concrete toy examples of diversity suppression under LCR.}
For $L=128$, Toy I reports the non-top token probability at the
first decoding step and the expected non-top fraction in the
final sequence.
Toy II reports the non-top sequence probability for ten
non-overlapping sequences.
Target (TPP) denotes the corresponding values under the tempered
distribution, which TPP preserves exactly, whereas LCR suppresses
non-top choices through cross-position competition.
}
\label{tab:toy-examples}

\vspace{0.5em}

\setlength{\tabcolsep}{2.1pt}
\renewcommand{\arraystretch}{1.18}

\newcommand{\toyheader}[1]{{\fontsize{6.8}{7.0}\selectfont #1}}
\newcommand{\toydata}[1]{{\fontsize{6.5}{7.0}\selectfont #1}}

\resizebox{0.95\textwidth}{!}{%
\begin{tabular}{|
c|
c
!{\color{black!25}\vrule width 0.4pt}
c|
c
!{\color{black!25}\vrule width 0.4pt}
c|
c
!{\color{black!25}\vrule width 0.4pt}
c|}
\cline{2-7}

\multicolumn{1}{c|}{}
& \multicolumn{4}{c|}{
    \toyheader{\textbf{Independent Token Generation (Toy I)}}}
& \multicolumn{2}{c|}{
    \toyheader{\textbf{Non-Overlapping Sequence Generation (Toy II)}}} \\
\hline

\toyheader{\textbf{$T$}}
& \multicolumn{2}{c|}{
    \toyheader{$\mathbb{P}(\text{non-top token})$}}
& \multicolumn{2}{c|}{
    \toyheader{$\mathbb{E}[N_{\text{non-top}}/L]$}}
& \multicolumn{2}{c|}{
    \toyheader{$\mathbb{P}(\text{non-top sequence})$}} \\
\cline{2-7}

& \toyheader{\textbf{LCR}}
& \toyheader{\textbf{Target (TPP)}}
& \toyheader{\textbf{LCR}}
& \toyheader{\textbf{Target (TPP)}}
& \toyheader{\textbf{LCR}}
& \toyheader{\textbf{Target (TPP)}} \\
\hline

\toydata{$1$}
& \toydata{$0.000139\%$}
& \toydata{$90.0\%$}
& \toydata{$7.03\%$}
& \toydata{$90.0\%$}
& \toydata{$0.0000000000394\%$}
& \toydata{$80.0\%$} \\

\toydata{$2$}
& \toydata{$0.00801\%$}
& \toydata{$92.90\%$}
& \toydata{$10.22\%$}
& \toydata{$92.90\%$}
& \toydata{$0.000000270\%$}
& \toydata{$85.71\%$} \\

\toydata{$\infty$}
& \toydata{$0.141\%$}
& \toydata{$95.0\%$}
& \toydata{$14.82\%$}
& \toydata{$95.0\%$}
& \toydata{$0.000139\%$}
& \toydata{$90.0\%$} \\

\hline
\end{tabular}%
}

\end{table*}

\subsection{Toy Model II: Non-Overlapping Sequences}
\label{sec:k-solution}

\paragraph{Setup.}
We next consider a sequence setting motivated by reasoning problems with distinct valid solution trajectories.
Let $\{\mathbf{x}_{(1)},\ldots,\mathbf{x}_{(K)}\}$ be a finite set of length-$L$ sequences with positive probabilities, one of which we generate from a fully masked state one token per step.
For simplicity, we assume that any two sequences differ at every position:
$(\mathbf{x}_{(k)})_i \neq (\mathbf{x}_{(\ell)})_i$ for all $k\neq \ell$ and all $i$.
Assume a unique top-probability sequence, and let $\pi_T$ denote its probability at temperature $T$.

\paragraph{Non-top sequence probability is exponentially suppressed.}
By construction, the first committed token identifies the sequence and fixes its continuation.
Under TPP, all positions have the same top-token probability, so any position can be selected.
Because the first committed token determines the sequence, TPP generates a non-top sequence with probability $1-\pi_T$.
Under LCR, however, a non-top sequence is generated only when all $L$ positions propose tokens from non-top sequences, since a single top-sequence proposal rejects every non-top proposal.
Thus, the probabilities of generating a non-top sequence under TPP and LCR are
\[
\boxed{
\mathbb{P}_{\mathrm{TPP}}
(\text{non-top sequence})
=
1-\pi_T,
\quad
\mathbb{P}_{\mathrm{LCR}}
(\text{non-top sequence})
=
(1-\pi_T)^L.
}
\]
TPP generates non-top sequences at exactly the probability specified by temperature, whereas under LCR that probability collapses exponentially with sequence length.

\paragraph{Concrete Example.}
Tab.~\ref{tab:toy-examples} (Right) considers ten non-overlapping length-$128$ sequences, where the top-probability sequence has probability $0.2$ and the other nine equally share $0.8$.
As $T\to\infty$, the tempered distribution assigns $90\%$ probability to non-top sequences.
TPP preserves this probability, whereas LCR suppresses it to \textbf{$0.000139\%$}.

Across both toy models, TPP preserves the non-top probability specified by temperature, whereas LCR can suppress it exponentially through competition across token proposals.
By rejecting lower-probability proposals before commitment, LCR can therefore severely constrain the token and sequence choices that reach the final output, even at high temperature.

\section{Low-Confidence Remasking Suppresses Diversity in Practice}
\label{sec:empirical-lcr}
\begin{figure*}[t]
    \centering
    \includegraphics[width=\textwidth]{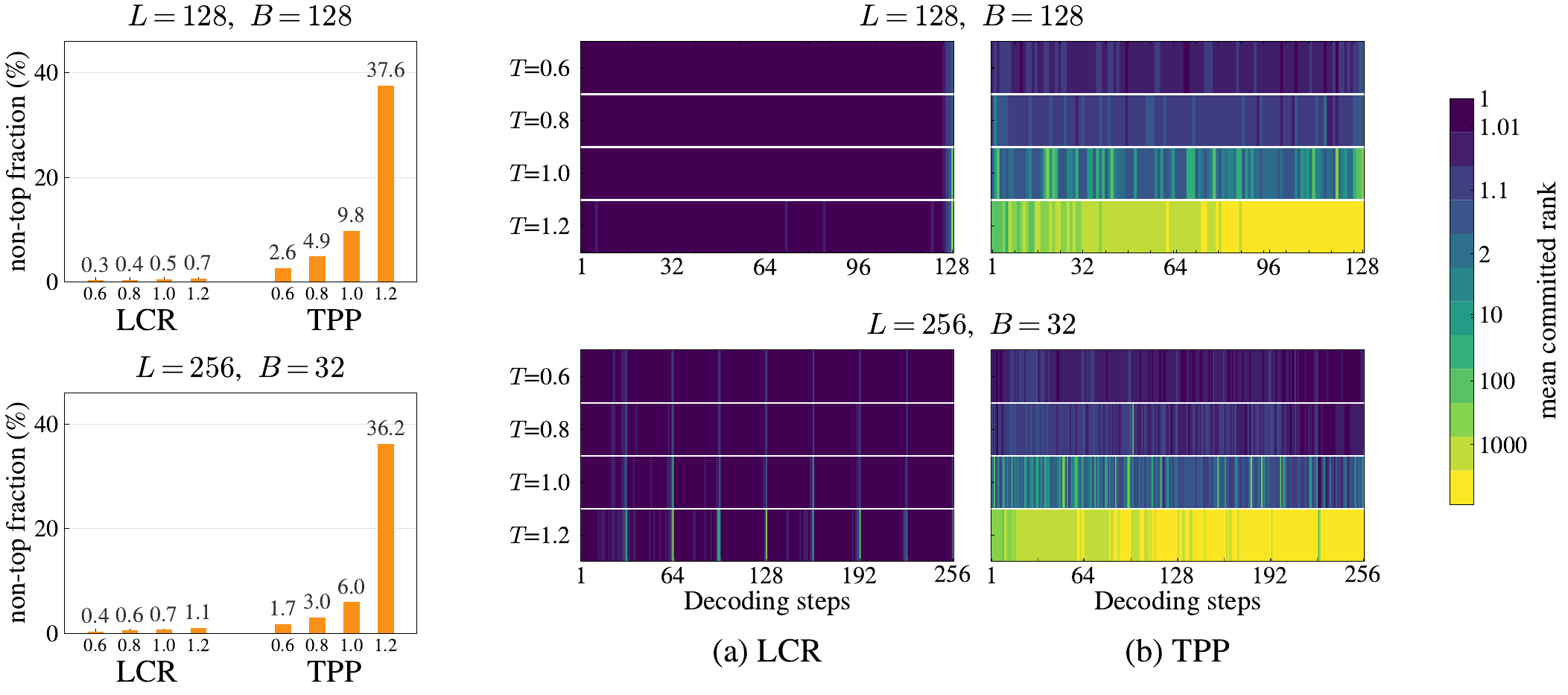}
    
\caption{
(\textbf{Left}) \textbf{Non-top commitment rate.}
Higher temperature substantially increases non-top commitments under TPP but barely under LCR.
Bars show the fraction of committed tokens that are non-top.
(\textbf{Right}) \textbf{Committed-token rank over decoding.}
LCR remains near rank one until the end of each sequence or block, whereas TPP commits higher-rank tokens throughout decoding as temperature increases.
Heatmaps show the per-step mean local rank of committed tokens.
Both panels use GSM8K with $L/B=128/128$ and $256/32$.
HumanEval results are provided in~\cref{fig:non_top_fraction_he}.
}

\vspace{0em}
    \label{fig:non_top_fraction_gsm}
\end{figure*}

Our analysis in Sec.~\ref{sec:temperature-suppression-analysis} predicts strong suppression of lower-probability proposals under LCR before commitment.
We test whether the same suppression occurs in a large-scale MDM, LLaDA-8B-Instruct~\citep{nie2025large}, using GSM8K~\citep{cobbe2021gsm8k} and HumanEval~\citep{chen2021humaneval}.
We generate $64$ rollouts per prompt with generation length/block size $128/128$ and $256/32$.
See App.~\ref{app:lcr-details} for experimental details and additional results.

\paragraph{Non-top commitment rate.}
We compare LCR and TPP across four temperatures $T\in\{0.6,0.8,1.0,1.2\}$ and measure the fraction of \emph{non-top} commitments: tokens that are not the model's highest-probability token at their position at the time of commitment.
\Cref{fig:non_top_fraction_gsm} (Left) shows that increasing temperature sharply increases this fraction under TPP, reaching $37.6\%$ and $36.2\%$ at $T=1.2$ for the $128/128$ and $256/32$ settings, whereas LCR remains at only $0.7\%$ and $1.1\%$.
\Cref{fig:non_top_fraction_gsm} (Right) further tracks the mean local rank of the committed token at its position over decoding steps.
Under LCR, this rank remains almost entirely at \textbf{one} even as temperature increases, indicating near-greedy token selection.
Ranks rise mainly near the end of each sequence or block, when only a few masked positions remain to compete.
Under TPP, by contrast, higher temperature raises the committed-token rank throughout decoding.
This pattern is consistent with~\cref{sec:iid-kary}: cross-position competition strongly suppresses non-top commitments under LCR until few competitors remain.

\begin{wrapfigure}{r}{0.4\columnwidth}
    \centering
    \vspace{-1.2em}

        \includegraphics[width=\linewidth]{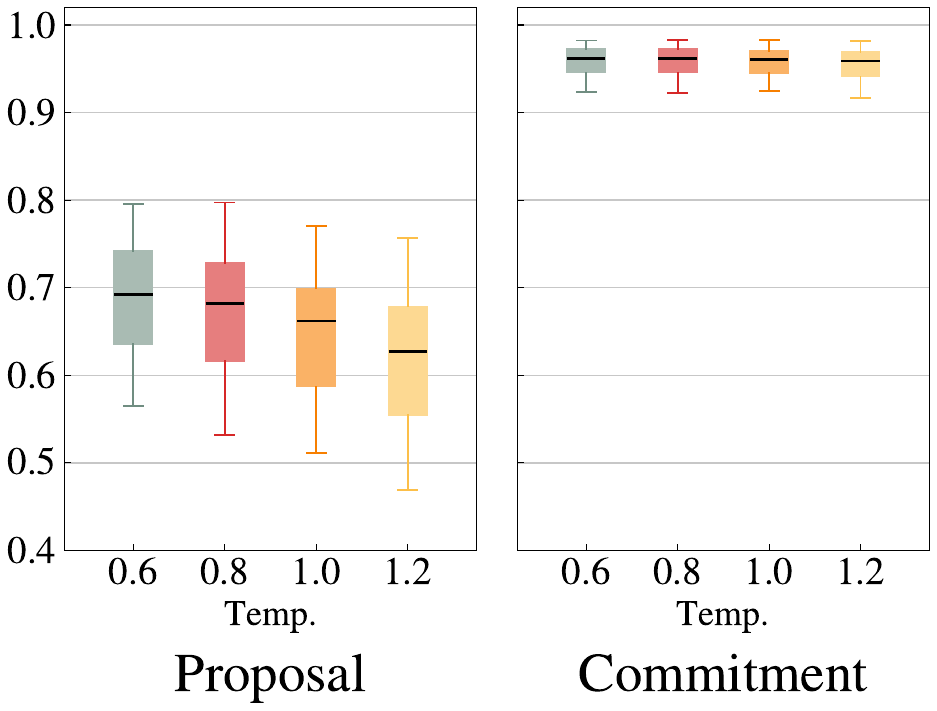}

    \vspace{-0.6em}

\caption{\hspace{-0.3em}
\textbf{LCR\hspace{-0.1em} proposal/commitment.}
Box plots show untempered model probabilities of sampled proposals (\textit{proposal}) and committed tokens (\textit{commitment}), pooled over decoding steps.
GSM8K results with $L/B=256/32$.
HumanEval results in App.~\ref{app:lcr-details}.
}

    \label{fig:lcr-proposal-commit-gsm}
    \vspace{-3em}
\end{wrapfigure}

\paragraph{Proposal vs. commitment under LCR.}
\Cref{fig:lcr-proposal-commit-gsm} compares the untempered model
probabilities of LCR's sampled proposals and committed tokens,
pooled across decoding steps.
As temperature increases, proposal probabilities shift downward,
showing that LCR samples more lower-probability tokens.
Yet committed-token probabilities remain concentrated at high values,
typically above $0.9$, across temperatures.
Temperature thus changes what LCR proposes far more than what it commits. 
Lower-probability alternatives are sampled, then largely rejected before they can enter the generated sequence.

Despite this suppression, LCR can still benefit from multiple rollouts through variation in generation order and occasional non-top commitments near the end of each block (see App.~\ref{app:lcr-order-diversity}).
However, these diagnostics show that most token-level exploration introduced by temperature is removed before commitment, substantially limiting the diversity contributed by token sampling.

\section{Realizing the Potential of Arbitrary-Order Rollouts}
\label{sec:flexibility-revisited}

Secs.~\ref{sec:temperature-suppression-analysis} and~\ref{sec:empirical-lcr} showed that LCR strongly suppresses lower-probability token commitments in both toy models and a large-scale MDM, revealing an inherent bias toward high-probability tokens.
We now revisit the reported advantage of left-to-right (AR) over arbitrary-order generation in rollout coverage~\citep{ni2026flexibility}.
We first examine whether this gap persists when LCR is replaced with TPP, and then whether even minimal use of arbitrary-order flexibility can further improve rollout diversity, solution coverage, and downstream policy optimization.
See Apps.~\ref{app:experimental-details} and~\ref{app:sec-exp-details-grpo} for details.

\subsection{Top-Probability Position Selection Closes the Rollout-Coverage Gap}
\label{sec:tpp-closes-gap}

\paragraph{Setup.}
We compare AR, LCR, and TPP using Pass@$k$ on \lladainst~\citep{nie2025large} and
\lladaonefive~\citep{zhu2026llada} across GSM8K~\citep{cobbe2021gsm8k},
MATH-500~\citep{hendrycks2021math}, HumanEval~\citep{chen2021humaneval},
and MBPP~\citep{austin2021mbpp}.
For each prompt, we generate $64$ rollouts and report Pass@$k$ for $k\leq64$.
Following~\citet{ni2026flexibility}, AR, LCR, and TPP use $T=0.6$.
Fig.~\ref{fig:passk-results} also includes the proposed method \textbf{EGI}, which we introduce in Sec.~\ref{sec:egi}.

\paragraph{Results.}
\Cref{fig:passk-results} shows a consistent pattern across benchmarks and models: LCR yields substantially weaker Pass@$k$, whereas TPP achieves Pass@$k$ comparable to AR.
This indicates that the token suppression observed under LCR is associated with reduced solution coverage across rollouts.

These results revisit the conclusion of \citet{ni2026flexibility}, who attribute the higher Pass@$k$ of AR over LCR to confidence-based decoding postponing uncertain positions.
Crucially, TPP also prioritizes positions with high top-token probabilities and can therefore postpone uncertain positions, yet achieves Pass@$k$ comparable to AR.
These results indicate that the diversity loss arises from LCR's proposal-rejection mechanism rather than from confidence-prioritized position selection itself.

\begin{figure*}[t]
    \centering
    \includegraphics[width=\textwidth]{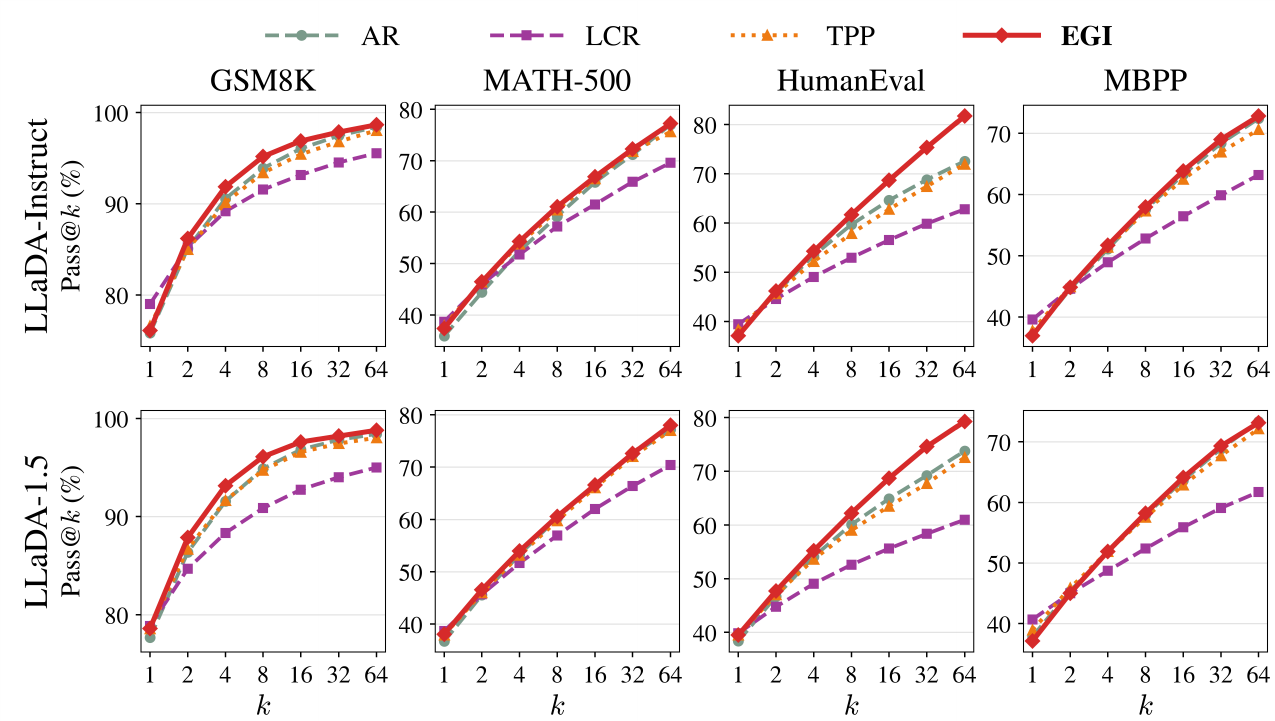}
\vspace{-1em}
\caption{\textbf{Pass@$\bm{k}$ results.}
Pass@$k$ for AR and three arbitrary-order decoding rules (LCR, TPP, EGI) with
\lladainst~and \lladaonefive.
LCR shows substantially weaker rollout coverage, TPP recovers performance comparable to AR, and EGI further improves Pass@$k$.
For LCR, TPP, and EGI, the generation length/block size is set to $256/32$.
}
\vspace{-0.8em}
    \label{fig:passk-results}
\end{figure*}

\subsection{A Single Use of Order Flexibility Surpasses Autoregressive Rollouts}
\label{sec:egi}

We next ask whether arbitrary-order flexibility can be leveraged further to improve rollout exploration.
TPP prioritizes high-confidence positions and is not explicitly designed to promote diversity.
We show that exploring this order flexibility only once, by changing the first committed position, can improve rollout diversity and solution coverage beyond AR.

We propose \textbf{Entropy-Guided Initialization} (EGI), which selects the highest-entropy position at the first decoding step,
$i^\star=\argmax_{i\in M}-\sum_{v\in\mathcal V}p_i(v)\log p_i(v)$,
samples and commits a token $x_{i^\star}\sim p_{i^\star}^{(T)}$, and follows TPP thereafter.
In the rollout experiments in this subsection, the first commitment uses $T=0.9$, while the subsequent TPP steps use $T=0.6$.
EGI thus uses order flexibility explicitly only at the first commitment, then follows regular TPP for the remaining steps.

\paragraph{Rollout diversity.}

\begin{wraptable}{r}{0.48\columnwidth}
    \centering
    \vspace{-2.2em}

    \caption{\textbf{Rollout diagnostics for EGI and AR.}
    Div-SB: Div-Self-BLEU; Pot.@$k$: Potential@$k$
    Higher values are better. HE denotes HumanEval. 
    Metric definitions in App.~\ref{app:diversity_exp_detail}.
    }
    \label{tab:rollout_diversity}

    \vspace{0.4em}
    \fontsize{8}{9.5}\selectfont
    \setlength{\tabcolsep}{3.0pt}
    \renewcommand{\arraystretch}{1.20}

    \resizebox{\linewidth}{!}{%
    \begin{tabular}{
        |>{\centering\arraybackslash}p{4.5mm}|
        c|
        c
        !{\color{black!25}\vrule width 0.4pt}
        c
        !{\color{black!25}\vrule width 0.4pt}
        c
        !{\color{black!25}\vrule width 0.4pt}
        c|
    }

        \cline{2-6}
        \multicolumn{1}{c|}{}
        & \textbf{Sampler}
        & \textbf{Div-SB}
        & \textbf{Pot.@8}
        & \textbf{Pot.@16}
        & \textbf{Pot.@64} \\
        \hline

        \multirow{2}{4.5mm}{%
            \centering
            \rotatebox[origin=c]{90}{%
                \fontsize{7}{8}\selectfont\bfseries GSM8K}}
        & AR
        & 52.3
        & 74.8
        & 83.0
        & 93.0
        \rule[-1.35ex]{0pt}{4.5ex} \\

        & EGI
        & \textbf{58.1}
        & \textbf{80.4}
        & \textbf{86.8}
        & \textbf{94.0}
        \rule[-1.35ex]{0pt}{4.5ex} \\

        \hline

        \multirow{2}{4.5mm}{%
            \centering
            \rotatebox[origin=c]{90}{%
                \fontsize{7}{8}\selectfont\bfseries HE}}
        & AR
        & 57.2
        & 34.5
        & 41.8
        & 54.5
        \rule[-1.35ex]{0pt}{4.5ex} \\

        & EGI
        & \textbf{58.8}
        & \textbf{38.5}
        & \textbf{49.1}
        & \textbf{69.7}
        \rule[-1.35ex]{0pt}{4.5ex} \\

        \hline
    \end{tabular}%
    }

    \vspace{-1em}
\end{wraptable}

\Cref{fig:passk-results} shows that EGI improves Pass@$k$ over both TPP and AR, with particularly clear gains on GSM8K and HumanEval.
We further compare EGI with AR using \textbf{Div-Self-BLEU}~\citep{zhu2018texygen,yao2025diversity}, which measures lexical diversity across rollouts, and \textbf{Potential@$k$}~\citep{yao2025diversity}, which measures how often problems missed by Pass@1 are recovered within $k$ rollouts.
\Cref{tab:rollout_diversity} confirms that EGI also exceeds AR in both metrics, indicating that even a single use of order flexibility can improve rollout diversity and solution coverage.

\paragraph{Effect of the first-step intervention.}




\begin{wraptable}{r}{0.42\columnwidth}
    \centering
    \vspace{-2.2em}

    \caption{\textbf{Decoding-order divergence.}
Mean pairwise normalized edit distance between commitment orders
across rollouts (mean $\pm$ SE).
Higher values indicate more diverse decoding orders.}
    \label{tab:tpp-egi-divergence}

    \vspace{0.4em}
    \fontsize{8}{9.5}\selectfont
    \setlength{\tabcolsep}{3.0pt}
    \renewcommand{\arraystretch}{1.20}

    \resizebox{\linewidth}{!}{%
    \begin{tabular}{|
        >{\centering\arraybackslash}p{12mm}|
        >{\centering\arraybackslash}p{22mm}
        !{\color{black!25}\vrule width 0.4pt}
        >{\centering\arraybackslash}p{22mm}|}

        \hline
        \textbf{Sampler}
        & \textbf{GSM8K}
        & \textbf{HumanEval} \\
        \hline

        TPP
        & 0.45 {\scriptsize $\pm 0.001$}
        & 0.39 {\scriptsize $\pm 0.006$}
        \rule[-1.1ex]{0pt}{4.0ex} \\

        \arrayrulecolor{black!25}
        \cline{1-3}
        \arrayrulecolor{black}

        EGI
        & \textbf{0.52} {\scriptsize $\pm 0.0006$}
        & \textbf{0.52} {\scriptsize $\pm 0.001$}
        \rule[-1.1ex]{0pt}{4.0ex} \\

        \hline
    \end{tabular}%
     }

    \vspace{-2em}
\end{wraptable}

To analyze how EGI's single-token intervention affects TPP,
we measure the mean pairwise normalized edit distance between the decoding-order permutations of different rollouts, with larger values indicating more divergent orders (see App.~\ref{app:diversity_exp_detail} for metric definitions).
\Cref{tab:tpp-egi-divergence} shows that EGI produces more diverse decoding orders than TPP despite differing only at the first step.

To test whether the gains come simply from increasing first-step temperature, we apply the same temperature increase to AR and observe much smaller gains (see App.~\ref{app:ar-firsttoken}).
These results suggest that selecting a high-entropy initial position through order flexibility contributes to EGI's gains beyond the temperature increase alone.

\subsection{Entropy-Guided Rollouts Improve Policy Optimization}
\label{sec:grpo}

We finally show that the broader rollout exploration provided by EGI translates into improved downstream performance under group-relative policy optimization (GRPO)~\citep{shao2024deepseekmath}.

\paragraph{Setup.}

\begin{wraptable}{r}{0.50\textwidth}
\vspace{-2.2em}
\centering

\caption{\textbf{RL results with AR and EGI rollouts.}
Best evaluation accuracy across checkpoints evaluated at fixed FLOP intervals ($1\times10^{19}$) for SPG, JustGRPO, and d2.
Within each method, only the training rollout sampler differs between AR and EGI.
The better result is shown in \textbf{bold}.}
\label{tab:rl_method_comparison}

\vspace{0.8em}
\fontsize{8.5}{10}\selectfont
\setlength{\tabcolsep}{2.1pt}
\renewcommand{\arraystretch}{1.18}

\resizebox{\linewidth}{!}{%
\begin{tabular}{|
    >{\centering\arraybackslash}p{4.2mm}|
    c|
    c
    !{\color{black!25}\vrule width 0.4pt}
    c
    !{\color{black!25}\vrule width 0.4pt}
    c
    !{\color{black!25}\vrule width 0.4pt}
    c|}

    \cline{2-6}
    \multicolumn{1}{c|}{}
    & \textbf{Sampler}
    & \textbf{GSM8K}
    & \textbf{MATH-500}
    & \textbf{HumanEval}
    & \textbf{MBPP} \\
    \hline

    \multirow{2}{4.2mm}{%
        \centering
        \rotatebox[origin=c]{90}{%
            \fontsize{7}{8}\selectfont\bfseries SPG}}
    & AR
    & 82.11
    & 37.8
    & 38.4
    & 42.8
    \rule[-1.05ex]{0pt}{4.15ex} \\

    & EGI
    & \textbf{84.08}
    & \textbf{39.6}
    & \textbf{40.2}
    & \textbf{43.4}
    \rule[-1.05ex]{0pt}{4.15ex} \\

    \arrayrulecolor{black!25}
    \hline
    \arrayrulecolor{black}

    \multirow{2}{4.2mm}[0.8mm]{%
        \centering
        \rotatebox[origin=c]{90}{%
            \fontsize{6.4}{7.2}\selectfont\bfseries JustGRPO}}
    & AR
    & 81.05
    & 36.8
    & 37.8
    & 44.0
    \rule[-1.05ex]{0pt}{4.15ex} \\

    & EGI
    & \textbf{83.17}
    & \textbf{38.2}
    & \textbf{39.0}
    & \textbf{44.6}
    \rule[-1.05ex]{0pt}{4.15ex} \\

    \arrayrulecolor{black!25}
    \hline
    \arrayrulecolor{black}

    \multirow{2}{4.2mm}{%
        \centering
        \rotatebox[origin=c]{90}{%
            \fontsize{7}{8}\selectfont\bfseries d2}}
    & AR
    & 81.73
    & 37.2
    & 37.2
    & 43.6
    \rule[-1.05ex]{0pt}{4.15ex} \\

    & EGI
    & \textbf{85.29}
    & \textbf{38.8}
    & \textbf{38.4}
    & \textbf{44.2}
    \rule[-1.05ex]{0pt}{4.15ex} \\

    \hline

\end{tabular}%
}

\vspace{-2em}
\end{wraptable}

We compare AR and EGI for rollout generation using three recent group-relative policy optimization methods for large MDMs: \spg~\citep{wang2026spg}, \jg~\citep{ni2026flexibility}, and \dtwo~\citep{wang2026d2}.
For each method, we vary only the rollout decoding rule, using either AR or EGI, while keeping the policy optimization objective unchanged.
We evaluate on GSM8K, MATH-500, HumanEval, and MBPP.
For the coding tasks, training uses a subset of AceCoder-87K~\citep{zeng2025acecoder}, following \citet{ni2026flexibility,gong2026diffucoder,ou2026principled}.

Following~\citet{wang2026d2}, we match total training compute across methods and evaluate checkpoints at fixed FLOP intervals, reporting the best evaluation accuracy.
All training rollouts use group size $G=8$ and a base temperature of $T=0.6$, matching the best AR configuration in~\citet{ni2026flexibility}.
For \egi{}, the first commitment uses $T=0.9$ for math and $T=0.6$ for code.
Evaluation uses deterministic LCR/TPP ($T=0$) for all methods, which are equivalent under greedy decoding, consistent with prior post-RL evaluation protocols~\citep{zhao2025d1,tang2025wd1,ni2026flexibility}.
Experiment details in App.~\ref{app:sec-exp-details-grpo}.

\paragraph{Results.}

\begin{wrapfigure}{r}{0.39\columnwidth}
    \centering
    \vspace{-1.2em}
    \includegraphics[width=\linewidth]{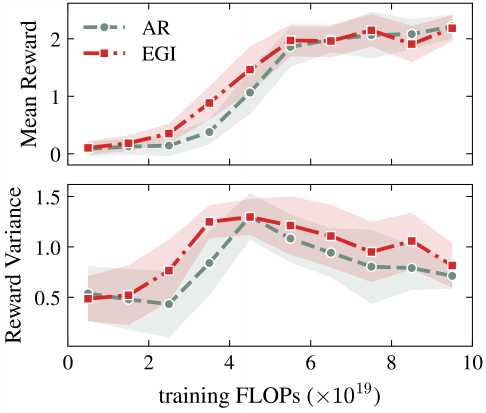}
  \vspace{-2em}
\caption{\textbf{Reward statistics during JustGRPO training on GSM8K.}
Mean reward (top) and variance (bottom) for AR and \egi{} rollouts.
}
    \label{fig:gsm8k_grpo}
    \vspace{-3em}
\end{wrapfigure}

\Cref{tab:rl_method_comparison} shows that \egi{} consistently outperforms AR rollouts across benchmarks and all three policy optimization methods under matched training compute.
This suggests that its broader rollout exploration translates into better downstream optimization outcomes.

\Cref{fig:gsm8k_grpo} provides a closer look at the training dynamics.
Compared with AR, \egi{} achieves faster gains in mean reward while maintaining higher within-group reward variance.
Thus, EGI improves rollout reward while preserving the reward variation used for group-relative learning.
Additional training dynamics across benchmarks in
App.~\ref{app:subsec-grpo_additional_results}.

Notably, \egi{} modifies only the first step before returning to standard TPP.
That this limited use of order flexibility improves both rollout exploration and downstream policy optimization suggests that arbitrary-order generation offers a promising design space for diverse rollouts.

\section{Conclusion}
\label{sec:conclusion}
We distinguish two decoding rules that have often been conflated under the same label of confidence-based decoding: low-confidence remasking (LCR) and top-probability position selection (TPP).
Through toy models and experiments with a large MDM, we show that LCR can severely suppress the lower-probability token exploration introduced by temperature, as cross-position rejection prevents most such proposals from reaching commitment and limits rollout diversity.
In contrast, TPP directly follows the tempered distribution at the selected position and achieves Pass@$k$ comparable to AR, while using order flexibility only at the first commitment to promote exploration further improves rollout diversity and downstream policy optimization.
These results highlight the importance of distinguishing LCR from TPP and suggest that arbitrary-order flexibility offers a promising design space for diverse rollout generation beyond AR.

\clearpage

\subsection*{AI Use Statement}
We used generative AI tools to assist with writing and editing the manuscript and with implementing and debugging experimental code.
All AI-assisted outputs were reviewed, revised, and cross-checked by the authors.
We take responsibility for the final content of this work.

\subsection*{Ethics Statement}
We are not aware of any specific ethical concerns raised by this work.

\subsection*{Reproducibility Statement}
Complete proofs of all theoretical results are provided in App.~\ref{app:theory}.
We also provide the experimental settings and implementation details needed to reproduce our results in Apps.~\ref{app:experimental-details} and~\ref{app:sec-exp-details-grpo}.

\bibliographystyle{plainnat}
\bibliography{iclr2027/iclr2027_conference}

\clearpage
\appendix

\section{Related Work}
\label{app:related-work}

\subsection{Prior Work on Diversity in Arbitrary-Order Decoding}
\label{app:related-flexibility}

Before discussing the closest prior work in detail, we summarize our positioning along three axes:

\begin{itemize}[leftmargin=*]
    \item \textbf{TPP, LCR, and their conflation.}
    The operational distinction between selecting among sampled tokens and selecting a position before token sampling is not itself new.
    \citet{hayakawa2026demystifying,fang2026locally} explicitly distinguish sample-then-choose from choose-then-sample procedures, while \citet{zhang2026differences} show that resampling a token after LCR selects a position largely restores $n$-gram entropy.
    None of these works, however, focuses on the existing TPP and LCR rules being conflated under \emph{confidence-based decoding}.
    Our focus is this conflation and why it matters for rollout sampling: the two rules define substantially different samplers when $T>0$, so conflating them can lead to misleading conclusions.

    \item \textbf{Diversity suppression under LCR.}
    Prior work reports weak Pass@$k$ and limited rollout coverage under LCR or closely related confidence-based decoding~\citep{ni2026flexibility,fang2026locally,olausson2026twoTemperatures}, providing the empirical starting point for our analysis.
    Most directly, \citet{zhang2026differences} theoretically establish entropy reduction under independent positions and empirically observe reduced $n$-gram entropy.
    We sharpen this analysis in an i.i.d.\ token model by showing exponential suppression of lower-probability choices and vanishing non-top fractions and normalized sequence entropy.
    We further show in an 8B LLaDA model that, even as sampling produces many lower-probability proposals, LCR commitments remain overwhelmingly concentrated on top tokens, and connect this suppression directly to reasoning rollout coverage.
    
    \item \textbf{Implications for arbitrary-order diversity.}
    Prior work responds to limited exploration in different ways, including favoring AR generation~\citep{ni2026flexibility}, introducing a separate position temperature~\citep{olausson2026twoTemperatures}, and approximately targeting a globally tempered distribution~\citep{fang2026locally}.
    Our results show that much of the lost coverage can instead be recovered simply with TPP, which avoids LCR's proposal rejection and directly follows the requested token distribution at the selected position.
    EGI then uses order flexibility at only the first commitment to further improve coverage, highlighting arbitrary-order generation as a promising design space for diverse rollouts.
\end{itemize}

\citet{ni2026flexibility} report lower Pass@$k$ under arbitrary-order (AO) than AR decoding, using LCR as their main AO rule and entropy-based, margin-based, and random-order decoding as additional baselines.
They explain this gap through \emph{entropy degradation}, where confidence-based AO decoding postpones uncertain forking tokens until surrounding context narrows their possible branches.
However, our results show that this conclusion is decoding-rule dependent.
TPP still prioritizes high-top-probability positions and can postpone uncertain forks, yet achieves Pass@$k$ close to AR, while EGI further improves rollout diversity and coverage over AR.
These results indicate that the AR--LCR gap is driven primarily by LCR's rejection of lower-probability proposals rather than confidence-prioritizing ordering itself, and that AO flexibility remains a promising design space for diverse rollout sampling.

\citet{zhang2026differences} provide the most directly related analysis of bias induced by LCR's proposal rejection.
They attribute the bias to proposal-dependent selection favoring higher-probability outcomes, and prove the resulting entropy reduction under independent positions.
They further show that resampling the token after LCR selects a position largely restores $n$-gram entropy, closely paralleling our observation that removing proposal rejection with TPP recovers rollout coverage.
We quantify this suppression more sharply: in our i.i.d.\ model, non-top commitments are exponentially suppressed as more positions compete, and their final fraction vanishes as $O(1/L)$ (Prop.~\ref{prop:app-token-counts}).
At the entropy level, whereas \citet{zhang2026differences} establish entropy reduction, we further show that the normalized sequence entropy itself vanishes as $O(\log L/L)$ (Prop.~\ref{prop:app-toy-i-entropy}).
We further observe in an 8B MDM that temperature produces many lower-probability proposals while LCR commitments remain overwhelmingly concentrated on top tokens, and connect this proposal rejection to rollout coverage and downstream policy optimization.

\citet{hayakawa2026demystifying} explicitly distinguish the sample-then-choose structure of MaskGIT from choose-then-sample procedures and show through their moment-sampler analysis that proposal-dependent position selection can implicitly sharpen the token distribution.
However, their analysis of the MaskGIT sampler with position temperature $\alpha$ does not cover the $\alpha\to0$ limit corresponding to LCR.
Our work connects this operational distinction to the conflation of TPP and LCR under confidence-based decoding, and directly compares the two rules at the same token temperature to quantify the resulting suppression of non-top tokens under LCR.

\citet{fang2026locally} analyze the quality--exploration trade-off using a TPP-style position-first setup and relate the resulting entropy bounds to the behavior of LCR (see App.~\ref{app:related-conflation}).
Our analysis instead models LCR's proposal rejection directly and identifies cross-position rejection, rather than confidence-prioritized ordering itself, as the dominant source of diversity suppression in our toy settings.
They improve Pass@$k$ with an Independent Metropolis--Hastings sampler that approximately targets a globally tempered distribution.
In contrast, we first remove LCR's proposal-rejection pathology with TPP and then obtain substantial additional gains from EGI, a one-step modification that highlights the untapped potential of AO flexibility.

\citet{olausson2026twoTemperatures} separate token and position temperature as controls over what to sample and where to decode, and use position temperature to improve rollout diversity.
Their token-temperature analysis, however, uses a TPP-style position-first score and shows that, under its assumptions, all anchors precede the fork for every $T>0$ (see App.~\ref{app:related-conflation}).
Under LCR, token temperature changes the sampled proposals and substantially diversifies commitment order even while committed tokens remain overwhelmingly top tokens, providing much of LCR's remaining rollout variation (App.~\ref{app:lcr-order-diversity}).
Thus, token and position temperature affect overlapping aspects of LCR, while neither generally eliminates the bias from proposal rejection at finite temperatures.
Our analysis identifies this rejection as the dominant source of diversity suppression in our toy settings and removes it with TPP.

\subsection{Conflation of TPP and LCR in Prior Work}
\label{app:related-conflation}

TPP and LCR have both been described using terms such as \textit{confidence-based decoding} or \textit{confidence-based unmasking}.
Position-first decoding as in TPP appears in \citet{wu2026fast,ben-hamu2025accelerated}, whereas proposal-based LCR is used under similar terminology in \citet{wang2026spg,zhao2025d1,tang2025wd1,zhan2026simple}.
Note that these works either evaluate only at $T=0$ or use implementations consistent with their descriptions, so the shared terminology is not problematic in practice.
In several works, however, the rule described in the paper differs from that used in a released $T>0$ implementation.
The implementation comparisons below refer to publicly released code paths and do not necessarily reconstruct the exact code used for every reported experiment.

\citet{wang2026revolutionizing} describe TPP-style position selection using the maximum token probability at each position, while their released Dream rollout path at $T=0.8$ uses LCR-style proposal rejection, ranking sampled proposals by their probabilities after temperature scaling and top-$p$/top-$k$ filtering.
\citet{yu2025dimple} describe TPP-style position selection based on proposal-independent, maximum-probability confidence in Alg.~1, while their released positive-temperature confidence path uses LCR-style proposal rejection based on the probabilities of sampled proposals.
\citet{wang2026d2} describe d2-AnyOrder using TPP-style position selection before token sampling, whereas the released $T=0.9$ rollout implementation uses LCR-style proposal rejection, committing only the two highest-scoring proposals.
\citet{kimklass} define TPP-style position confidence by the maximum token probability in Def.~4.1, while the released Dream Top-$k$ confidence baseline at $T=0.2$ uses LCR-style proposal rejection based on sampled-proposal confidence.
Under matched scoring and position-selection settings, these TPP-style and LCR-style decoding rules coincide at $T=0$ but generally induce different sampling distributions at $T>0$.

The same distinction is needed when interpreting theoretical analyses.
\citet{fang2026locally} derive an entropy bound under the condition $\max_{v\in\mathcal V}p_i(v)\geq1-\delta$, with the committed token at position $i$ sampled from $p_i$.
They distinguish sample-then-filter from rank-then-sample procedures, but the resulting entropy bound uses a TPP-style position-first abstraction that does not explicitly model LCR's proposal rejection.
Similarly, Prop.~2 of \citet{olausson2026twoTemperatures} shows that, under an additional logit-gap assumption and with one token unmasked per step, all anchors precede the fork for every $T>0$, but analyzes the position-first score $c_i=\max_v p_i^{(T)}(v)$ rather than LCR's sampled-proposal confidence.
Thus, both analyses use position-first abstractions to interpret behavior under LCR, where proposal sampling and cross-position rejection define a different stochastic decoding rule.
By directly analyzing the LCR procedure and comparing it with TPP, we identify cross-position proposal rejection as a substantial contributor to diversity suppression in our settings.

\subsection{Clarifying the Origins of TPP and LCR}
\label{app:related-origins}

We use \citet{kim2025train} and \citet{nie2025large} as convenient reference formulations of TPP and LCR, respectively, rather than as claims about their independent origins.
\citet{kim2025train} study adaptive token ordering for MDMs, and their Top-$K$ probability rule selects positions by maximum token probability before sampling tokens.
For one-position commitment, this is the rule we call TPP.
\citet{nie2025large} describe low-confidence remasking following MaskGIT~\citep{chang2022maskgit}.
Their released $T>0$ path first samples proposals and then selects positions using the probabilities of those proposals, which is the rule we call LCR.

Both procedures have broader precedents in iterative masked prediction and masked diffusion~\citep{ghazvininejad2019maskpredict,chang2022maskgit,zheng2024a}.
Our terminology is therefore not intended to assign either decoding rule to a single originating work.
It is intended to make explicit their operational difference at $T>0$, where TPP selects a position before token sampling while LCR samples proposals before deciding which one to commit.

\clearpage
\section{Proofs and Additional Results}
\label{app:theory}

Both toy models follow the setups in Sec.~4 and use the TPP/LCR rules in Sec.~\ref{sec:tempered-decoding-rules}.
We start from a fully masked sequence and commit one token per step at temperature $T\in(0,\infty]$.
LCR proposals are independently redrawn at each step conditional on the current sequence.

\subsection{Toy Model I: Independent Tokens}
\label{app:independent-tokens}

\paragraph{Setup.}
We use the setting of Sec.~4.1, where every position independently follows the same strictly positive distribution $p$ over a finite vocabulary $\mathcal V$ with a unique top-probability token.
Recall that
\[
c_T:=\max_{v\in\mathcal V}p^{(T)}(v),
\]
and let $N_{\text{non-top}}$ denote the number of non-top tokens in the final length-$L$ sequence.

\begin{proposition}[Non-top token probability]
\label{prop:app-token-probability}
With $m$ masked positions remaining,
\begin{equation}
\label{eq:app-token-probability}
\mathbb P_{\mathrm{TPP}}(\text{non-top commitment})
=
1-c_T,
\qquad
\mathbb P_{\mathrm{LCR}}(\text{non-top commitment})
=
(1-c_T)^m.
\end{equation}
\end{proposition}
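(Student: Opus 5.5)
The plan is to treat the two rules separately, using only the i.i.d.\ structure of Toy Model I. In this model the conditional distribution at every masked position equals $p$, whatever has already been committed. Hence at a step with $m$ masked positions, every position has the same tempered distribution $p^{(T)}$. Let $v^\star$ denote the unique top token of $p$. Tempering with $T\in(0,\infty]$ is monotone in $p$, so $v^\star$ is also a top token of $p^{(T)}$ and $c_T=p^{(T)}(v^\star)$. At $T=\infty$, $p^{(T)}$ is uniform, so $c_T=1/|\mathcal V|$ and sampling $v^\star$ still has probability $c_T$. Thus ``non-top'' always means ``not $v^\star$,'' and each draw from $p^{(T)}$ is non-top with probability $1-c_T$.

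\emph{TPP.} All masked positions share the same score $c_i=\max_v p(v)$, so the argmax is a tie. Whatever tie-breaking rule is used, possibly even a random one, it depends only on the scores and not on any sampled token. After selection, the committed token is drawn from $p_{i^\star}^{(T)}=p^{(T)}$. Conditioning on the selected position and then averaging gives
\[
\mathbb P_{\mathrm{TPP}}(\text{non-top})=1-p^{(T)}(v^\star)=1-c_T.
\]

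\emph{LCR.} The $m$ proposals $\tilde x_i\sim p^{(T)}$ are drawn independently and scored by the untempered $p(\tilde x_i)$. The key step is the event identity
\[
\{\text{committed token}\neq v^\star\}=\{\tilde x_i\neq v^\star\ \text{for all } i\in M\}.
\]
For ``$\supseteq$'': if no proposal equals $v^\star$, the committed token is one of the proposals and hence is non-top. For ``$\subseteq$'': if some proposal equals $v^\star$, its score $p(v^\star)$ is strictly larger than $p(v)$ for every $v\neq v^\star$, by uniqueness of the top token. So the argmax is attained only at positions proposing $v^\star$, and ties among those positions are harmless because they all commit $v^\star$. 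By independence of the proposals, the probability of the right-hand event is $(1-c_T)^m$.

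The main point requiring care is this tie-breaking issue. The proof needs uniqueness of the top token of $p$, together with scoring under the untempered $p$, so that a $v^\star$ proposal strictly beats every non-top proposal. This remains valid at $T=\infty$, where the tempered distribution is flat but the scores are not. The rest is bookkeeping. The i.i.d.\ assumption ensures that the distribution at each step does not depend on the history, so the formula holds at every step with $m$ remaining positions, not only at the first step.
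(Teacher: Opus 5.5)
Your proof is correct and follows essentially the same route as the paper: TPP commits a token drawn directly from $p^{(T)}$ at a position chosen without reference to the sampled token. Under LCR, any proposal of the unique top token strictly outscores every non-top proposal, so a non-top commitment occurs exactly when all $m$ independent proposals are non-top, which gives $(1-c_T)^m$; your explicit treatment of tie-breaking and of $T=\infty$ (where ``non-top'' must be read relative to the untempered $p$) simply makes precise details the paper leaves implicit.
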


\begin{proof}
Under TPP, the selected position is sampled directly from $p^{(T)}$, so a non-top token is committed with probability $1-c_T$.

Under LCR, any proposal of the unique top token rejects every non-top proposal.
A non-top token is therefore committed if and only if all $m$ positions propose non-top tokens.
Since the proposals are independent draws from $p^{(T)}$,
\[
\mathbb P_{\mathrm{LCR}}(\text{non-top commitment})
=
(1-c_T)^m.
\]
\end{proof}

\begin{proposition}[Expected non-top fraction]
\label{prop:app-token-counts}
The expected non-top fractions satisfy
\begin{equation}
\label{eq:app-token-counts}
\mathbb E_{\mathrm{TPP}}
\left[
\frac{N_{\text{non-top}}}{L}
\right]
=
1-c_T,
\qquad
\mathbb E_{\mathrm{LCR}}
\left[
\frac{N_{\text{non-top}}}{L}
\right]
=
\frac{(1-c_T)[1-(1-c_T)^L]}{Lc_T}
\leq
\frac{1-c_T}{Lc_T}
=
O\!\left(\frac1L\right).
\end{equation}
\end{proposition}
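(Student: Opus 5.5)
The plan is to write $N_{\text{non-top}}=\sum_{t=1}^{L}\1\{\text{step } t \text{ commits a non-top token}\}$, apply linearity of expectation, and evaluate each step's indicator with Prop.~\ref{prop:app-token-probability}. The i.i.d.\ structure makes this exact. Committing a token at one position does not change the distribution $p$ at any other position, so at step $t$ the remaining masked positions are still i.i.d.\ with law $p$. The proposals are also redrawn at each step. The only state that matters at step $t$ is the number of masked positions, which is deterministically $m=L-t+1$, since exactly one token is committed per step.

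For TPP, Prop.~\ref{prop:app-token-probability} gives probability $1-c_T$ at every step. Summing over the $L$ steps gives $\mathbb E[N_{\text{non-top}}]=L(1-c_T)$, and dividing by $L$ yields the first identity.

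For LCR, the same proposition gives probability $(1-c_T)^{L-t+1}$ at step $t$. Hence
\[
\mathbb E_{\mathrm{LCR}}[N_{\text{non-top}}]=\sum_{m=1}^{L}(1-c_T)^m .
\]
This is a finite geometric series with ratio $q=1-c_T\in[0,1)$, so it equals $q(1-q^L)/(1-q)=(1-c_T)[1-(1-c_T)^L]/c_T$. Dividing by $L$ gives the stated closed form. The upper bound follows by dropping the factor $1-(1-c_T)^L\le 1$. The $O(1/L)$ claim holds because $c_T$ is fixed and positive, independent of $L$.

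The only point needing care is $c_T>0$, so that the series is summable and the bound is finite. This holds because the vocabulary is finite. Even at $T=\infty$, the tempered distribution is uniform, giving $c_T=1/|\mathcal V|>0$. The unique top token ensures that "non-top" is well defined under every $p^{(T)}$, since tempering preserves the argmax, and this is what makes Prop.~\ref{prop:app-token-probability} applicable at every step.
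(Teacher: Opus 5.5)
Your proposal is correct and is essentially the paper's proof: linearity over the $L$ steps, the per-step probabilities $1-c_T$ and $(1-c_T)^m$ from Prop.~\ref{prop:app-token-probability}, and the finite geometric sum $\sum_{m=1}^{L}(1-c_T)^m$. Your remarks on $c_T>0$ and on the masked positions staying i.i.d.\ make explicit what the paper leaves implicit. One small wording fix: at $T=\infty$ the tempered argmax is not unique, so the fact you actually need is that the top token of $p$ still attains $c_T$ under $p^{(T)}$ and strictly wins under LCR's untempered scores $p_i(\tilde x_i)$.
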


\begin{proof}
By Proposition~\ref{prop:app-token-probability}, the probability of a non-top commitment is $1-c_T$ under TPP and $(1-c_T)^m$ under LCR when $m$ positions remain.
Summing over the $L$ decoding steps gives
\[
\mathbb E_{\mathrm{TPP}}[N_{\text{non-top}}]
=
L(1-c_T),
\qquad
\mathbb E_{\mathrm{LCR}}[N_{\text{non-top}}]
=
\sum_{m=1}^{L}(1-c_T)^m
=
\frac{(1-c_T)[1-(1-c_T)^L]}{c_T}.
\]
Dividing by $L$ gives the result.
\end{proof}

\subsection{Toy Model II: Non-Overlapping Sequences}
\label{app:nonoverlapping-sequences}

\paragraph{Setup.}
We use the setting of Sec.~4.2, where the candidate length-$L$ sequences differ at every position and there is a unique top-probability sequence.
Recall that $\pi_T$ denotes the probability of this sequence after tempering.

\begin{proposition}[Non-top sequence probability]
\label{prop:app-sequence-probability}
The probability of generating a non-top sequence is
\begin{equation}
\label{eq:app-sequence-probability}
\mathbb P_{\mathrm{TPP}}(\text{non-top sequence})
=
1-\pi_T,
\qquad
\mathbb P_{\mathrm{LCR}}(\text{non-top sequence})
=
(1-\pi_T)^L.
\end{equation}
\end{proposition}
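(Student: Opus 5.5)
The key structural fact is that in the non-overlapping model the first committed token determines the whole output. So the proof reduces to computing, under each rule, the probability that the very first commitment (from the fully masked state with $m=L$ positions) comes from a non-top sequence.

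\textbf{Step 1: the first commitment fixes the sequence.} Because $(\mathbf{x}_{(k)})_i\neq(\mathbf{x}_{(\ell)})_i$ for all $k\neq\ell$ and every $i$, a token committed at position $i$ matches exactly one candidate $\mathbf{x}_{(k)}$. After it is committed, the conditional distribution at every remaining position is a point mass on $(\mathbf{x}_{(k)})_j$. Tempering leaves a point mass unchanged, so for every $T\in(0,\infty]$ both TPP and LCR deterministically complete $\mathbf{x}_{(k)}$. Hence ``non-top sequence'' is exactly the event ``the first commitment is a non-top-sequence token.''

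\textbf{Step 2: the fully masked marginals.} At position $i$, the token $(\mathbf{x}_{(k)})_i$ has probability equal to the sequence probability of $\mathbf{x}_{(k)}$, since the tokens are distinct across sequences. The per-position marginal is therefore a relabeling of the sequence distribution. Its tempered version assigns $\pi_T$ to the top-sequence token and $1-\pi_T$ in total to the rest, identically at every position.

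\textbf{Step 3: TPP.} All positions share the same top-token probability, so whichever position is selected (under any tie-breaking), the token is drawn from the tempered marginal. The committed token belongs to a non-top sequence with probability $1-\pi_T$.

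\textbf{Step 4: LCR.} The proposals at the $L$ positions are independent draws from the tempered marginal. The top-sequence token has the strictly largest untempered probability, because the top sequence is unique. So any top-sequence proposal strictly outscores every non-top proposal. LCR commits a non-top token if and only if all $L$ proposals are non-top, and independence gives probability $(1-\pi_T)^L$.

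The main obstacle is Step 1: justifying that the model conditionals after the first commitment are degenerate, and that both rules respect them for every temperature including $T=\infty$. With $T=\infty$, tempering is read as uniform over the support, and the support is a single token, so the point mass is preserved. Everything else mirrors Proposition~\ref{prop:app-token-probability} with $m=L$.
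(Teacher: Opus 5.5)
Your proposal is correct and follows the paper's argument: the first commitment determines the whole sequence. TPP then draws a non-top-sequence token with probability $1-\pi_T$, and LCR commits one only when all $L$ independent initial proposals are non-top, which gives $(1-\pi_T)^L$. Your Steps 1--2 also spell out two points the paper leaves implicit: each fully masked per-position marginal is a relabeling of the sequence distribution, so its tempered top mass is exactly $\pi_T$; and the post-commitment conditionals are point masses, which tempering preserves for every $T\in(0,\infty]$.
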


\begin{proof}
Because any two candidate sequences differ at every position, each token at any position identifies exactly one sequence.
The first committed token therefore determines the entire generated sequence.

Before the first commitment, every position has the same top-token probability.
Under TPP, any position may be selected and its token is sampled directly from the tempered distribution.
The first token therefore identifies a non-top sequence with probability $1-\pi_T$.

Under LCR, a proposal belonging to the top-probability sequence has higher model probability than any proposal belonging to a non-top sequence.
A non-top sequence is therefore selected if and only if all $L$ initial proposals belong to non-top sequences.
Since these proposals are independent,
\[
\mathbb P_{\mathrm{LCR}}(\text{non-top sequence})
=
(1-\pi_T)^L.
\]
\end{proof}

\subsection{Additional Results}
\label{app:additional-results}

The token-level suppression in Sec.~4.1 also implies a stronger distributional consequence.
Although TPP retains constant entropy per token, the entropy per token under LCR vanishes as the sequence length grows.

\begin{proposition}[Final-sequence entropy]
\label{prop:app-toy-i-entropy}
Let $X^{1:L}$ denote the final sequence in Toy Model I.
Then
\[
H_{\mathrm{TPP}}(X^{1:L})
=
L H(p^{(T)}),
\qquad
H_{\mathrm{LCR}}(X^{1:L})
=
O(\log L),
\]
so $H_{\mathrm{LCR}}(X^{1:L})/L=O(\log L/L)\to0$.
\end{proposition}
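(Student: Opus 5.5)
For TPP we will show that the final sequence has exactly the i.i.d.\ law $(p^{(T)})^{\otimes L}$. For LCR we will bound the entropy by encoding the output with few bits.

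\textbf{TPP.} In Toy Model I every masked position has the same top-token probability. Whatever position TPP selects, it draws the committed token from $p^{(T)}$ independently of all earlier commitments. Conditional on the decoding order, the tokens are therefore i.i.d.\ $p^{(T)}$, so this is also their unconditional law. Even if ties are broken randomly, the order carries no information about the values. Hence $X^{1:L}\sim (p^{(T)})^{\otimes L}$ and $H_{\mathrm{TPP}}=LH(p^{(T)})$.

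\textbf{LCR, reduction.} Let $v^\star$ be the top token. Write $X^{1:L}$ as a deterministic function of two parts: the set $S\subseteq\{1,\dots,L\}$ of positions holding non-top tokens, and the list of those non-top values ordered by position. Given $|S|=N$, the values take at most $|\mathcal V|^N$ configurations, and $S$ takes at most $\binom{L}{N}\le L^N$ configurations. Using $H(X)\le H(N)+\mathbb E[\log(\#\text{configurations given }N)]$, we get
\[
H_{\mathrm{LCR}}(X^{1:L})\le H(N)+\mathbb E[N]\,(\log L+\log|\mathcal V|).
\]
By Prop.~\ref{prop:app-token-counts}, $\mathbb E[N]\le (1-c_T)/c_T=O(1)$, uniformly in $L$. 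Also $N\in\{0,\dots,L\}$ gives $H(N)\le\log(L+1)$. Together these yield $H_{\mathrm{LCR}}=O(\log L)$, and dividing by $L$ gives the claimed $O(\log L/L)\to0$.

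\textbf{Main obstacle.} The delicate point is that non-top commitments are not independent across steps, and which position receives a token depends on proposal ties and tie-breaking. The bound above avoids this by using only $\mathbb E[N]$, which Prop.~\ref{prop:app-token-counts} already supplies by linearity over steps, together with the crude counting of $S$ and the values. No independence is needed.

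For $T=\infty$ we have $c_T=1/|\mathcal V|>0$, so the constant $(1-c_T)/c_T$ stays finite and the bound still holds. A sharper version could bound $H(N)$ by $O(1)$ via the geometric-type tail of $N$, but this is unnecessary for the stated rate.
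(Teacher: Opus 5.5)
Your proof is correct. The TPP half matches the paper, and you are slightly more careful than the paper in arguing that the decoding order carries no information about the token values. The LCR half takes a different route.

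The paper bounds the joint entropy through the marginals. It uses the grouping bound $H(X^i)\le h(r_i)+r_i\log(|\mathcal V|-1)$ with $r_i=\mathbb P(X^i\neq a)$, then subadditivity $H(X^{1:L})\le\sum_i H(X^i)$, then concavity of the binary entropy $h$. This gives $L\,h(\mathbb E[N]/L)+\mathbb E[N]\log(|\mathcal V|-1)$ with $N=N_{\text{non-top}}$, which is $O(\log L)$ once $\mathbb E[N]=O(1)$. You instead bound the joint entropy directly by counting: $X^{1:L}$ is in bijection with the pair (non-top positions, their values), and conditioning on $N$ gives $H(N)+\mathbb E[N](\log L+\log|\mathcal V|)$.

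Your argument is more elementary, with no per-position Fano step and no Jensen. It also makes explicit that an LCR output is just a few non-top tokens placed among $L$ slots. The paper's route avoids the separate $H(N)$ term, so its leading constant is $\mathbb E[N]$, whereas your crude bound $H(N)\le\log(L+1)$ gives $\mathbb E[N]+1$. Both arguments rely only on $\mathbb E[N]\le(1-c_T)/c_T$ from Proposition~\ref{prop:app-token-counts}, so neither needs independence or any control of tie-breaking.

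Two side notes, neither of which affects the validity of your proof:
\begin{itemize}
\item Your proposed sharpening $H(N)=O(1)$ needs no tail argument. Any nonnegative integer-valued variable with mean $\mu$ has entropy at most $(\mu+1)\log(\mu+1)-\mu\log\mu$, so a bounded mean already suffices.
\item Your remark that non-top commitments are dependent across steps is not accurate in Toy Model I. The step-wise non-top indicators are independent Bernoulli$((1-c_T)^m)$, because proposals are redrawn fresh each step from a law that ignores the committed tokens, and $m$ is deterministic.
\end{itemize}
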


\begin{proof}
Under TPP, every position is sampled directly from $p^{(T)}$, so
\[
X^{1:L}\sim(p^{(T)})^{\otimes L},
\]
which gives
\[
H_{\mathrm{TPP}}(X^{1:L})
=
L H(p^{(T)}).
\]

Under LCR, let $r_i=\mathbb P(X^i\neq a)$.
For each position,
\[
H(X^i)
\leq
h(r_i)+r_i\log(|\mathcal V|-1),
\]
where $h$ is the binary entropy function.
By subadditivity of entropy and concavity of $h$,
\[
H_{\mathrm{LCR}}(X^{1:L})
\leq
Lh\!\left(
\frac{\mathbb E_{\mathrm{LCR}}[N_{\text{non-top}}]}{L}
\right)
+
\mathbb E_{\mathrm{LCR}}[N_{\text{non-top}}]
\log(|\mathcal V|-1).
\]
Proposition~\ref{prop:app-token-counts} gives
$\mathbb E_{\mathrm{LCR}}[N_{\text{non-top}}]=O(1)$,
so the right-hand side is $O(\log L)$.
\end{proof}

\paragraph{Expanded numerical examples.}
Tab.~\ref{tab:toy-examples-expanded} expands the examples in Tab.~\ref{tab:toy-examples} across sequence lengths and temperatures.
For reference, ``TPP-equiv.\ $T$'' is the TPP temperature that gives the same non-top probability as the corresponding LCR result.
For $K$ outcomes with top probability $w$ and the remaining probability divided equally among the other $K-1$ outcomes, it is computed as
\[
T_{\mathrm{TPP\text{-}equiv}}(u)=\frac{\log\frac{(K-1)w}{1-w}}{\log\frac{(K-1)(1-u)}{u}},
\]
where $u$ is the non-top probability to be matched.
All entries in Tab.~\ref{tab:toy-examples-expanded} are analytic evaluations rather than simulation estimates.

\begin{table}[p]
\centering
\caption{
\textbf{Expanded toy examples of diversity suppression under LCR.}
Panel (a) uses the same 20-token distribution as Toy Model I, with top probability $0.1$ and the remaining $0.9$ shared equally among 19 tokens.
Panel (b) uses the same ten non-overlapping sequences as Toy Model II, with top-sequence probability $0.2$ and the remaining $0.8$ shared equally among nine sequences.
Probabilities and expected fractions are reported in percent.
``TPP-equiv.\ $T$'' is the TPP temperature matching the corresponding LCR result.
}
\label{tab:toy-examples-expanded}

\small
\setlength{\tabcolsep}{3pt}
\renewcommand{\arraystretch}{1.12}
\vspace{5pt}

\begin{tabular*}{\textwidth}{@{\extracolsep{\fill}}ccrrrrrr@{}}
\multicolumn{8}{@{}l}{\textbf{(a) Independent Token Generation (Toy I)}}\\[4pt]
\toprule
& &
\multicolumn{3}{c}{$\mathbb P(\text{non-top token})$}
&
\multicolumn{3}{c}{$\mathbb E[N_{\text{non-top}}/L]$}
\\
\cmidrule(lr){3-5}
\cmidrule(l){6-8}

$L$
&
$T$
&
\multicolumn{1}{c}{LCR}
&
\multicolumn{1}{c}{Target (TPP)}
&
\multicolumn{1}{c}{TPP-equiv.\ $T$}
&
\multicolumn{1}{c}{LCR}
&
\multicolumn{1}{c}{Target (TPP)}
&
\multicolumn{1}{c}{TPP-equiv.\ $T$}
\\
\midrule

$32$ & $0.5$ & $0.1179$ & $81.00$ & $0.0771$ & $13.31$ & $81.00$ & $0.1551$ \\
     & $1$ & $3.434$ & $90.00$ & $0.1190$ & $27.16$ & $90.00$ & $0.1901$ \\
     & $2$ & $9.46$ & $92.90$ & $0.1436$ & $37.00$ & $92.90$ & $0.2149$ \\
     & $\infty$ & $19.37$ & $95.00$ & $0.1710$ & $47.87$ & $95.00$ & $0.2466$ \\

\addlinespace[4pt]

$128$ & $0.5$ & $1.932\times10^{-10}$ & $81.00$ & $0.0250$ & $3.33$ & $81.00$ & $0.1184$ \\
      & $1$ & $1.390\times10^{-4}$ & $90.00$ & $0.0455$ & $7.03$ & $90.00$ & $0.1352$ \\
      & $2$ & $8.010\times10^{-3}$ & $92.90$ & $0.0604$ & $10.22$ & $92.90$ & $0.1460$ \\
      & $\infty$ & $0.1408$ & $95.00$ & $0.0786$ & $14.82$ & $95.00$ & $0.1592$ \\

\addlinespace[4pt]

$512$ & $0.5$ & $1.394\times10^{-45}$ & $81.00$ & $0.0067$ & $0.83$ & $81.00$ & $0.0967$ \\
      & $1$ & $3.734\times10^{-22}$ & $90.00$ & $0.0131$ & $1.76$ & $90.00$ & $0.1072$ \\
      & $2$ & $4.117\times10^{-15}$ & $92.90$ & $0.0184$ & $2.55$ & $92.90$ & $0.1135$ \\
      & $\infty$ & $3.931\times10^{-10}$ & $95.00$ & $0.0256$ & $3.71$ & $95.00$ & $0.1205$ \\

\bottomrule
\end{tabular*}

\vspace{14pt}

\begin{tabular*}{\textwidth}{@{\extracolsep{\fill}}ccrrr@{}}
\multicolumn{5}{@{}l}{\textbf{(b) Non-Overlapping Sequence Generation (Toy II)}}\\[4pt]
\toprule
& &
\multicolumn{3}{c}{$\mathbb P(\text{non-top sequence})$}
\\
\cmidrule(l){3-5}

$L$
&
$T$
&
\multicolumn{1}{c}{LCR}
&
\multicolumn{1}{c}{Target (TPP)}
&
\multicolumn{1}{c}{TPP-equiv.\ $T$}
\\
\midrule

$32$ & $0.5$ & $6.277\times10^{-5}$ & $64.00$ & $0.0492$ \\
     & $1$ & $0.07923$ & $80.00$ & $0.0869$ \\
     & $2$ & $0.7206$ & $85.71$ & $0.1138$ \\
     & $\infty$ & $3.434$ & $90.00$ & $0.1465$ \\

\addlinespace[4pt]

$128$ & $0.5$ & $1.553\times10^{-23}$ & $64.00$ & $0.0137$ \\
      & $1$ & $3.940\times10^{-11}$ & $80.00$ & $0.0264$ \\
      & $2$ & $2.697\times10^{-7}$ & $85.71$ & $0.0370$ \\
      & $\infty$ & $1.390\times10^{-4}$ & $90.00$ & $0.0517$ \\

\addlinespace[4pt]

$512$ & $0.5$ & $5.810\times10^{-98}$ & $64.00$ & $0.0035$ \\
      & $1$ & $2.410\times10^{-48}$ & $80.00$ & $0.0070$ \\
      & $2$ & $5.287\times10^{-33}$ & $85.71$ & $0.0100$ \\
      & $\infty$ & $3.734\times10^{-22}$ & $90.00$ & $0.0144$ \\

\bottomrule
\end{tabular*}

\end{table}

\clearpage

\section{Experimental Details}
\label{app:experimental-details}

\subsection{Proposal and Commitment Diagnostics}
\label{app:lcr-details}

\begin{figure*}[t]
    \centering
    \includegraphics[width=\textwidth]{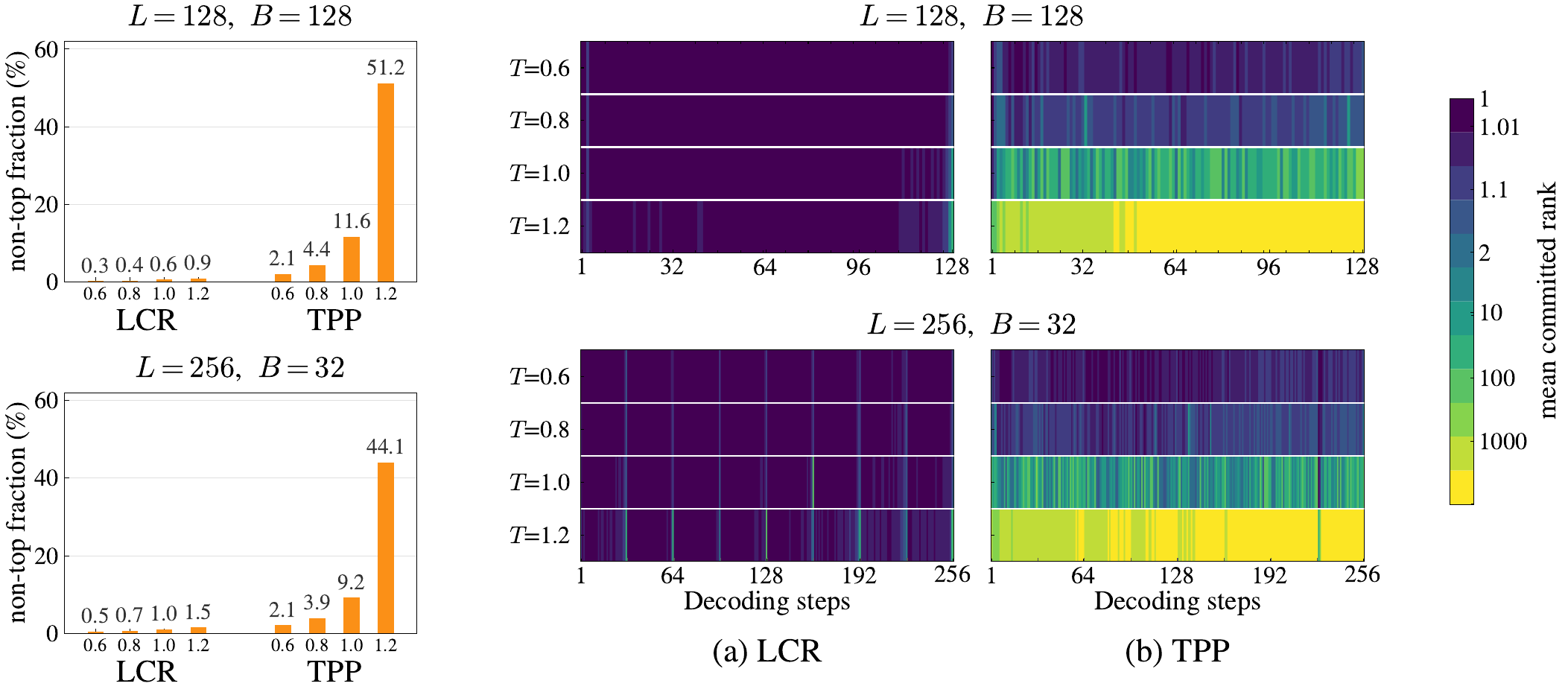}
    
\caption{
(\textbf{Left}) \textbf{Non-top commitment rate.}
Higher temperature substantially increases non-top commitments under TPP but barely under LCR.
Bars show the fraction of committed tokens that are non-top.
(\textbf{Right}) \textbf{Committed-token rank over decoding.}
LCR remains near rank one until the end of each sequence or block, whereas TPP commits higher-rank tokens throughout decoding as temperature increases.
Heatmaps show the per-step mean local rank of committed tokens.
Both panels use HumanEval with $L/B=128/128$ and $256/32$.}

\vspace{0em}
    \label{fig:non_top_fraction_he}
\end{figure*}

\begin{wrapfigure}{r}{0.4\columnwidth}
    \centering
    \vspace{-1.5em}
    \includegraphics[width=\linewidth]{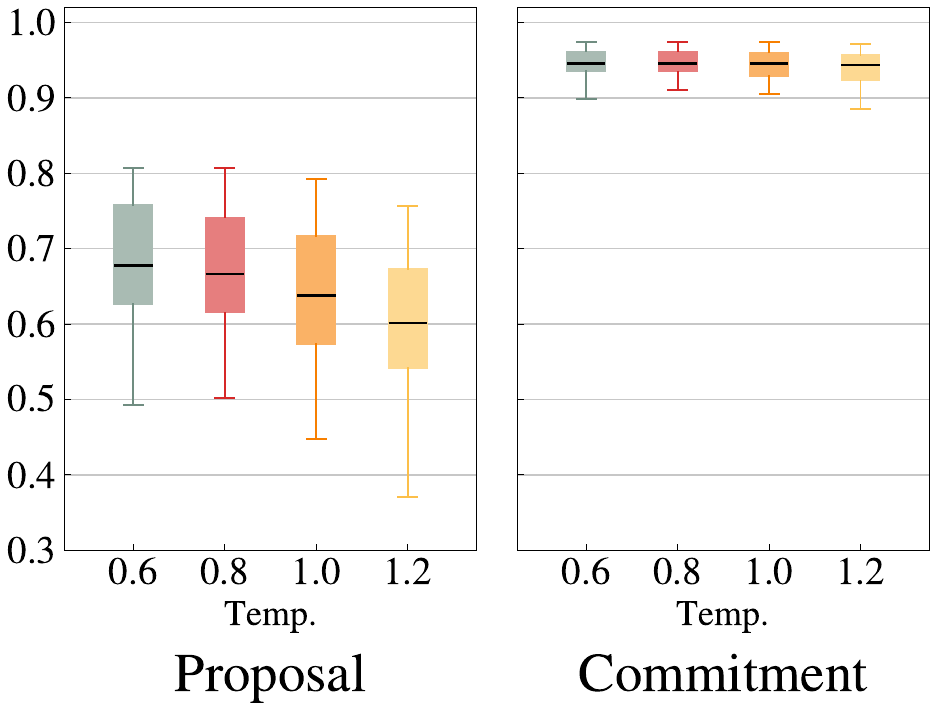}
    \vspace{-2em}
\caption{
\textbf{LCR\hspace{-0.1em} proposal/commitment.}
Box plots show untempered model probabilities of proposed and committed tokens,
pooled across decoding steps on HumanEval ($L/B=256/32$).
}
\label{fig:lcr-proposal-commit-he}
    \vspace{-2em}
\end{wrapfigure}

We evaluate LLaDA-8B-Instruct on GSM8K and HumanEval using $64$ rollouts
per prompt, with generation length/block size configurations of
$128/128$ and $256/32$.
Across both benchmarks and configurations, LCR commits almost exclusively
to the local argmax, the highest-probability token at each position
at the time of commitment.
Non-top commitments become appreciable only when very few masked positions remain, typically two or three
(Figs.~\ref{fig:non_top_fraction_gsm}
and~\ref{fig:non_top_fraction_he}).
Departures from the local argmax are thus concentrated near sequence
or block boundaries, suggesting that cross-position competition
suppresses them while many positions remain available.

The gap between proposals and commitments reveals how this behavior arises.
LCR frequently proposes tokens with untempered model probabilities
around $0.6$--$0.7$, yet most lose to higher-probability proposals
at competing positions.
Regardless of the sampling temperature tested, committed tokens concentrate at much higher probabilities, often above $0.9$
(Figs.~\ref{fig:lcr-proposal-commit-gsm}
and~\ref{fig:lcr-proposal-commit-he}).
These observations suggest that LCR's near-argmax behavior arises from selection at the commitment stage: variability in sampled
proposals need not translate into comparable variability in
committed tokens.

\subsection{Why LCR Still Gains from Multiple Rollouts}
\label{app:lcr-order-diversity}

Although LCR rejects most lower-probability token proposals, its rollouts
are not identical.
Tab.~\ref{tab:lcr-order-diversity} shows that increasing temperature
increases both the normalized edit distance between generation orders
and the token-level Hamming distance between final outputs.
Here, normalized edit distance counts insertions and deletions divided
by the combined sequence length, while normalized Hamming distance
measures the fraction of token positions that differ, 
where higher distance indicates greater diversity.

The two distances are strongly correlated across rollout pairs
($0.79$--$0.96$).
Different generation orders change the contexts available for later
predictions, allowing order variation to propagate into different outputs.

Meanwhile, nearly all committed tokens remain the local argmax
across temperatures
(Figs.~\ref{fig:non_top_fraction_gsm}
and~\ref{fig:non_top_fraction_he}).
Together, these observations suggest that generation order is a major
source of LCR's remaining rollout variation, helping explain why
Pass@$k$ can improve with additional rollouts despite nearly greedy
token commitments.
However, this variation primarily explores different near-greedy
trajectories, leaving alternatives that require lower-probability
token commitments less explored.

\begin{table}[t]
\centering
\caption{
\textbf{Generation-order and output diversity under LCR.}
For $64$ rollouts per prompt, \emph{Norm. Edit} and \emph{Norm. Hamming} report the mean pairwise normalized edit distance between generation orders and the normalized Hamming distance between final token sequences, respectively, with values shown as mean $\pm$ SE. \emph{Corr.} denotes the Pearson correlation between the two distances across pairs.
}
\label{tab:lcr-order-diversity}
\vspace{0.4em}

\footnotesize
\setlength{\tabcolsep}{4pt}
\renewcommand{\arraystretch}{1.08}

\begin{tabular*}{\columnwidth}{
@{\extracolsep{\fill}}
lcccc
@{}
}
\toprule
\textbf{Dataset}
& \textbf{$T$}
& \textbf{Norm. Edit}
& \textbf{Norm. Hamming}
& \textbf{Corr.}
\\
\midrule

\multirow{4}{*}{GSM8K}
& $0.6$ & $0.32 \pm 0.004$ & $0.51 \pm 0.007$ & $0.94$ \\
& $0.8$ & $0.37 \pm 0.003$ & $0.59 \pm 0.006$ & $0.92$ \\
& $1.0$ & $0.42 \pm 0.003$ & $0.66 \pm 0.006$ & $0.86$ \\
& $1.2$ & $0.45 \pm 0.002$ & $0.71 \pm 0.005$ & $0.79$ \\

\midrule

\multirow{4}{*}{HumanEval}
& $0.6$ & $0.33 \pm 0.009$ & $0.56 \pm 0.017$ & $0.96$ \\
& $0.8$ & $0.38 \pm 0.007$ & $0.64 \pm 0.014$ & $0.94$ \\
& $1.0$ & $0.42 \pm 0.006$ & $0.72 \pm 0.012$ & $0.92$ \\
& $1.2$ & $0.45 \pm 0.005$ & $0.76 \pm 0.010$ & $0.91$ \\

\bottomrule
\end{tabular*}

\end{table}

\subsection{Solution Coverage Comparisons}
\label{app:coverage-details}

Following~\citet{ni2026flexibility}, we compare AR, LCR, TPP, and EGI at a baseline temperature of $T=0.6$ in Fig.~\ref{fig:passk-results}. For EGI, only the first commitment uses $T=0.9$; all subsequent commitments use $T=0.6$. 
We generate $n=64$ rollouts per prompt and report Pass@$k$ following~\citet{chen2021humaneval}: \[ \operatorname{Pass@}k = \mathbb{E}_{q} \left[ 1- \frac{\binom{n-c_q}{k}} {\binom{n}{k}} \right], \] where $c_q$ is the number of correct solutions among the $n$ samples for prompt $q$.

\subsection{Diversity and Potential Metrics}
\label{app:diversity_exp_detail}

We report three metrics:
Div-Self-BLEU measures surface-level diversity among generated solutions,
Potential@$k$ measures multi-rollout success with greater weight on
problems with low single-rollout success, and normalized edit distance
measures diversity in commitment orders.

\paragraph{Div-Self-BLEU.}
Following~\citet{yao2025diversity}, we report Div-Self-BLEU to measure
inter-rollout diversity.
For each prompt, each generated response is treated as a hypothesis while
each of the remaining responses serves as a single reference in turn, and the
resulting BLEU scores are averaged across response pairs.
Since a lower Self-BLEU indicates greater diversity, we report
\[
    \mathrm{Div\text{-}Self\text{-}BLEU}
    =
    100 - \mathrm{Self\text{-}BLEU}.
\]
Div-Self-BLEU therefore ranges from $0$ to $100$, with higher values indicating
greater diversity among the generated rollouts.
Refer to~\citet{wiher2022bleu} for additional details about BLEU metric.

\paragraph{Potential@$k$.}
We further report Potential@$k$~\citep{yao2025diversity}, which measures the
ability of additional rollouts to recover problems that are not solved by a
single rollout.
For $N$ evaluation problems, it is defined as
\[
    \mathrm{Potential@}K
    =
    \frac{
        \sum_{i=1}^{N}
        \mathrm{Pass@}K(q_i)
        \left(1-\mathrm{Pass@}1(q_i)\right)
    }{
        \sum_{i=1}^{N}
        \left(1-\mathrm{Pass@}1(q_i)\right)
    },
\]
where $q_i$ denotes the $i$-th problem.
Thus, Potential@$k$ focuses on the additional solution coverage obtained from
multiple rollouts beyond single-rollout performance.

Following~\citet{yao2025diversity}, we compute Pass@$1$ using deterministic LCR/TPP with $T=0$, under which TPP and LCR produce the same decoding behavior.

\paragraph{Normalized edit distance.}
We measure decoding-order diversity by comparing the sequences of
positions committed in different rollouts.
For two commitment orders $\pi$ and $\sigma$, we define
\[
    \operatorname{NED}(\pi,\sigma)
    =
    \frac{d_{\mathrm{ID}}(\pi,\sigma)}
         {|\pi|+|\sigma|},
\]
where $d_{\mathrm{ID}}$ is the minimum number of insertions and
deletions required to transform one sequence into the other;
substitutions are not allowed.
When both orders are permutations of the same $n$ positions,
this simplifies to
\[
    \operatorname{NED}(\pi,\sigma)
    =
    1-\frac{\operatorname{LCS}(\pi,\sigma)}{n},
\]
where $\operatorname{LCS}$ denotes the length of the longest common
subsequence.
For each prompt, we average NED over all unordered pairs of rollouts,
then average the resulting values across prompts.
A value of zero indicates identical commitment orders, while larger
values indicate greater order diversity.

For example, consider commitment orders $\pi=(1,2,3,4)$ and
$\sigma=(2,3,4,1)$.
Deleting position $1$ from the beginning and inserting it at the end
requires two edits, giving $\operatorname{NED}=2/(4+4)=0.25$.
Equivalently, their longest common subsequence is $(2,3,4)$,
so $\operatorname{NED}=1-3/4=0.25$.

\subsection{Effect of First-Step Temperature in AR}
\label{app:ar-firsttoken}

To test whether EGI's gains arise simply from increasing the first-step sampling temperature, we evaluate an AR variant that raises the temperature to $0.9$ only at the first decoding step, matching \egi{}. All subsequent steps retain the standard AR temperature.

As shown in \Cref{fig:ar-fork-egi}, this modification has only a limited effect, with Pass@$k$ remaining close to that of standard AR. 
One possible explanation is that AR always begins at the fixed position immediately following the prompt, where predictive uncertainty may already be relatively low. 
Increasing the temperature at this position may therefore induce only limited additional variation. By comparison, \egi{} uses order flexibility to select a high-entropy initial position, where sampling at a higher temperature may have a larger effect on subsequent trajectories. This comparison suggests that the choice of the initial position may play an important role beyond the temperature increase alone.

\begin{figure*}[t]
    \centering
    \includegraphics[width=\textwidth]{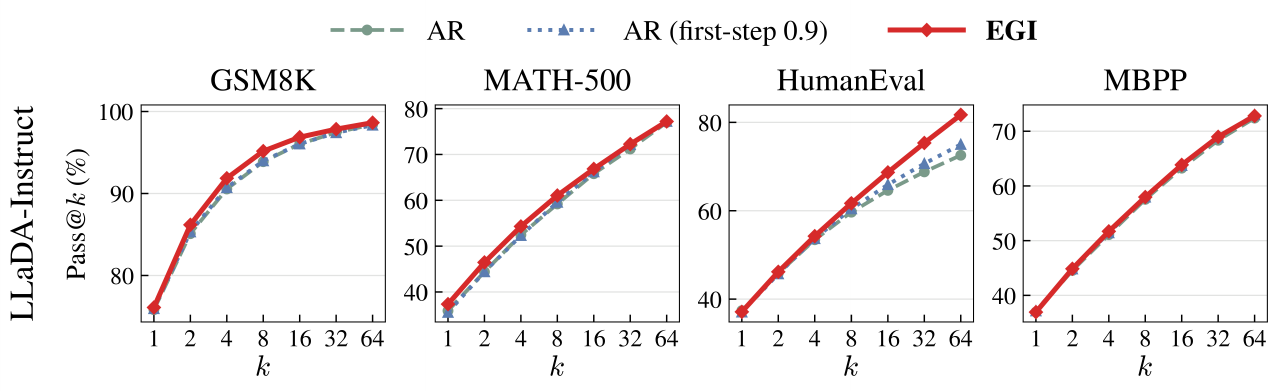}
\vspace{-1em}
\caption{\textbf{Effect of first-step temperature on Pass@$\bm{k}$.}
Raising AR's first-step temperature to $0.9$ yields performance close to standard AR, while \egi{} achieves higher Pass@$k$.
}
\vspace{-0.8em}
    \label{fig:ar-fork-egi}
\end{figure*}

\clearpage
\section{Experimental Details on GRPO}
\label{app:sec-exp-details-grpo}

\paragraph{Training configuration.}
Following \citet{zhao2025d1}, we equip the attention and MLP projections
of \lladainst{} with LoRA adapters~\citep{hu2022lora}, using rank $128$,
$\alpha = 64$, and dropout $0.05$, while keeping the backbone frozen
in 4-bit precision.
This parameter-efficient setup has also been adopted or extended by
\citet{tang2025wd1,wang2026spg,wang2026d2}.
For consistency, we adapt the RL training pipeline of
\citet{ni2026flexibility} from full-parameter fine-tuning to this LoRA setup,
preserving its likelihood definition and objective form.

Across all three GRPO objectives, we use a learning rate of $3 \times 10^{-6}$
and four inner iterations per rollout batch.
Each optimizer update uses eight prompts with eight rollouts per prompt,
giving an effective batch size of $64$ completions.
Prompt tokens are randomly masked with probability $0.15$.

The applicability of clipping and KL regularization depends on the
objective.
Following \citet{zhao2025d1}, we use a clipping range of $\epsilon = 0.5$
for objectives with importance-ratio clipping and a KL coefficient of
$\beta = 0.04$ for objectives with a KL term.

\paragraph{Rollout configuration.}
We use a maximum generation length of $256$, a block size of $32$,
and $8$ rollouts per prompt in all training runs.
Following the standard \lladainst{} generation protocol~\citep{nie2025large},
we commit one masked token per decoding step, requiring $256$ steps
for a length-$256$ completion.
AR and EGI share these settings and differ only in the sampling rule,
requiring the same number of model evaluations and comparable
computational costs per rollout.

Following the best-performing AR configuration in terms of Pass@$k$
reported by \citet{ni2026flexibility}, we use a base temperature of
$T=0.6$ for both AR and EGI.
For EGI's initial token sampling, we use $T=0.9$ on math benchmarks
and $T=0.6$ on coding benchmarks.

\newenvironment{objlist}{%
  \begin{list}{\textbullet}{%
    \setlength{\leftmargin}{0.9em}%
    \setlength{\labelwidth}{0.45em}%
    \setlength{\labelsep}{0.35em}%
    \setlength{\itemindent}{0pt}%
    \setlength{\listparindent}{0pt}%
    \setlength{\rightmargin}{0pt}%
    \setlength{\topsep}{0.3\baselineskip}%
    \setlength{\partopsep}{0pt}%
    \setlength{\itemsep}{0.35\baselineskip}%
    \setlength{\parsep}{0.15\baselineskip}%
  }%
}{\end{list}}

\paragraph{Baselines.}
We compare three policy-optimization objectives, each crossed with both sampling rules.
All three are group-relative: for a query $q$, a group of $G$ responses is drawn from the old policy, and each response $o_i$ is assigned an advantage $\hat{A}$ by standardizing its reward against the group statistics.
They differ in how they handle the sequence likelihood.
Two of them share the clipped surrogate
\begin{equation}
\mathcal{C}(\rho, \hat{A}) = \min\big(\rho\hat{A},\;
\mathrm{clip}(\rho, 1-\epsilon, 1+\epsilon)\,\hat{A}\big),
\end{equation}
which we write once here to keep the objectives below compact.

\begin{objlist}
\item \textbf{SPG-MIX.}
\citet{wang2026spg} optimize bounds on the likelihood rather than an estimate of it,
choosing the bound by the sign of the advantage:
\begin{equation}
\mathcal{J}_{\mathrm{SPG}}(\theta)=\mathbb{E}\Big[\tfrac{1}{G}\textstyle\sum_{j}
\hat{A}^j \mathcal{L}^j\Big],
\end{equation}
where $\mathcal{L}^j=\mathcal{L}_{\mathrm{ELBO}}$ when $\hat{A}^j \ge 0$ and
$\tilde{\mathcal{L}}_{\mathrm{EUBO}}$ otherwise,
so a positively-advantaged trace raises a lower bound on its likelihood while a
negatively-advantaged one lowers an upper bound --- the sandwich the method is named for.
Writing $w(t)$ for the weighting at diffusion time $t$, $z_t$ for the noised sequence and
$m_{t,i}$ for the indicator that position $i$ is masked in $z_t$, the two bounds are
\begin{equation}
\mathcal{L}_{\mathrm{ELBO}}=\mathbb{E}_{t,z_t}\Big[\textstyle\sum_i w(t)\,m_{t,i}
\log \pi_\theta(x_i \mid z_t)\Big],
\end{equation}
\begin{equation}
\tilde{\mathcal{L}}_{\mathrm{EUBO}}=\tfrac{1}{\beta}\textstyle\sum_i \log
\mathbb{E}_{t,z_t}\Big[w(t)\,m_{t,i}\,\pi^{\beta}_\theta(x_i \mid z_t)\Big],
\end{equation}
with $\beta$ the bound temperature.
Their mixed variant, which we use and refer to as SPG-MIX, replaces the negative branch with
$\omega\tilde{\mathcal{L}}_{\mathrm{EUBO}}+(1-\omega)\mathcal{L}_{\mathrm{ELBO}}$.
We set $\omega = 0.5$ and the bound temperature to $1.5$, and estimate the likelihood at two
diffusion times drawn from $[0,1]$ under a block-random forward process.

\item \textbf{JustGRPO.}
\citet{ni2026flexibility} forgo arbitrary order during RL training, sampling rollouts
autoregressively so that the likelihood factorizes as
\begin{equation}
\log \pi^{\mathrm{AR}}_\theta(o\mid q)=\textstyle\sum_k
\log \pi^{\mathrm{AR}}_\theta(o_k\mid o_{<k},q),
\end{equation}
so that every generated position is scored exactly.
This admits standard GRPO~\citep{shao2024deepseekmath}:
\begin{equation}
\mathcal{J}(\theta)=\mathbb{E}\Big[\tfrac{1}{G}\textstyle\sum_{i,k}
\tfrac{1}{|o_i|} \mathcal{C}(\rho_{i,k}, \hat{A}_{i,k})\Big] - \beta D_{\mathrm{KL}},
\end{equation}
where $\rho_{i,k}$ is the token-level ratio of $\pi^{\mathrm{AR}}_\theta$ to
$\pi^{\mathrm{AR}}_{\theta_{\text{old}}}$ at $o_{i,k}$ given $o_{i,<k}$ and $q$.

\smallskip
\noindent
In our \egi{} condition, we change only the rollout sampler while retaining left-to-right likelihood evaluation and the same surrogate objective.
This introduces a mismatch between the rollout distribution and the
autoregressive likelihood used for optimization.
Accordingly, we treat this setting as a JustGRPO-style surrogate with \egi{} rollouts, allowing us to study the effect of the rollout sampler while keeping the optimization rule fixed.

\item \textbf{d2-StepMerge.}
\citet{wang2026d2} decompose the trajectory likelihood over $N$ merged segments rather than all $T$ decoding steps, replacing $\prod_{t}\pi_\theta(x_t \mid x_{t+1})$ with
$\prod_{n}\pi_\theta(x_{nT/N} \mid x_{(n+1)T/N})$, and optimize
\begin{equation}
\mathbb{E}\Big[\tfrac{1}{L}\textstyle\sum_{n,l}
\mathbf{1}_{n,l}\,\mathcal{C}(\rho^l_n, \hat{A}^l)\Big] - \beta D_{\mathrm{KL}},
\end{equation}
where $\rho^l_n$ is the ratio of $\pi_\theta$ to $\pi_{\text{old}}$ at
$x^l_{nT/N}$ given $x^{1:L}_{(n+1)T/N}$, and $\mathbf{1}_{n,l}$ selects the positions
unmasked in segment $n$.
Larger $N$ tightens the decomposition at the cost of more model passes.
Following the per-benchmark choice of \citet{wang2026d2} we use $N = 8$ for GSM8K and $N = 16$ for MATH-500 and code.
\end{objlist}

\paragraph{Benchmarks and rewards.}
We follow \citet{zhao2025d1} for math prompts and rewards, and
\citet{ni2026flexibility} for coding, adjusting only the coding format
reward to accept trailing text after the first code block, consistent
with the evaluation harness.
Prompts and rewards are identical across objectives within each benchmark.

\paragraph{Evaluation protocol.}
We compare all objectives and sampling rules under a matched training budget
of $1 \times 10^{20}$ accounted FLOPs, with evaluation every
$1 \times 10^{19}$ FLOPs.
Each run therefore has ten evaluation checkpoints, and we report the
highest accuracy among them.
Evaluation uses each benchmark's full test set with the same deterministic
decoding procedure across runs: LCR/TPP with temperature set to zero,
which are equivalent under greedy decoding, consistent with prior
post-RL evaluation protocols~\citep{zhao2025d1,tang2025wd1,ni2026flexibility}.
Training stops when the accumulated FLOP count reaches the budget;
the number of optimizer steps is thus determined by each method's
accounted compute cost rather than fixed in advance.

\paragraph{FLOP accounting.}
Following \citet{wang2026d2}, we use a model-based FLOP accounting convention.
We charge $2P$ per token for a no-gradient forward pass and $4P$ for an
evaluation including activation backpropagation through the frozen backbone,
where $P = 8 \times 10^{9}$.
The additional gradient cost of the comparatively small LoRA adapters is
neglected.
These charges provide a hardware-independent compute proxy, not an exact
measurement of implementation-level FLOPs.

The counter uses forward-row counts recorded by the implementation,
multiplied by the full sequence length $L$ (prompt plus completion) and
the applicable per-token charge.
Let $G$ denote the group size across data-parallel processes, $T$ the
decoding steps per rollout, and $M$ the per-completion likelihood
evaluations defined above.
The counted components are:
\begin{itemize}
    \item \textbf{Rollout generation:} $GT$ rows per generation batch,
    charged at $2P$ per token.
    \item \textbf{Old- and reference-policy likelihoods:} $GM$ rows each
    time either policy's likelihood estimator is evaluated,
    charged at $2P$ per token.
    \item \textbf{Current-policy likelihood:} $GM$ rows each time the
    likelihood estimator is evaluated with backpropagation,
    charged at $4P$ per token.
\end{itemize}
Costs accumulate whenever the corresponding computation occurs,
following each objective's update schedule.
Separate counters for rollout, old-policy, reference-policy, and
current-policy costs are aggregated across data-parallel processes and
logged at each training step, so the per-component breakdown is recorded for every run.

The budget excludes reward computation (including sandboxed code execution),
optimizer arithmetic, tokenization, and periodic evaluation.
It therefore measures accounted model computation during training rather
than total training cost.
The rollout protocol fixes $T = 256$.
Likelihood costs depend on both the update schedule and $M$:
$M = 2$ for SPG-MIX, $M = N$ for d2-StepMerge, and $M = 256$ for JustGRPO.

\paragraph{Hardware.}
Runs execute on a single node of four NVIDIA B200 GPUs under ZeRO-2 sharding with bfloat16
mixed precision.
The number of prompts behind an optimizer update is fixed rather than derived from the device
count, so the effective batch is identical across runs.

\subsection{Additional Training Dynamics}
\label{app:subsec-grpo_additional_results}

\begin{wrapfigure}{r}{0.55\columnwidth}
    \centering
    \vspace{-1.2em}
    \includegraphics[width=\linewidth]{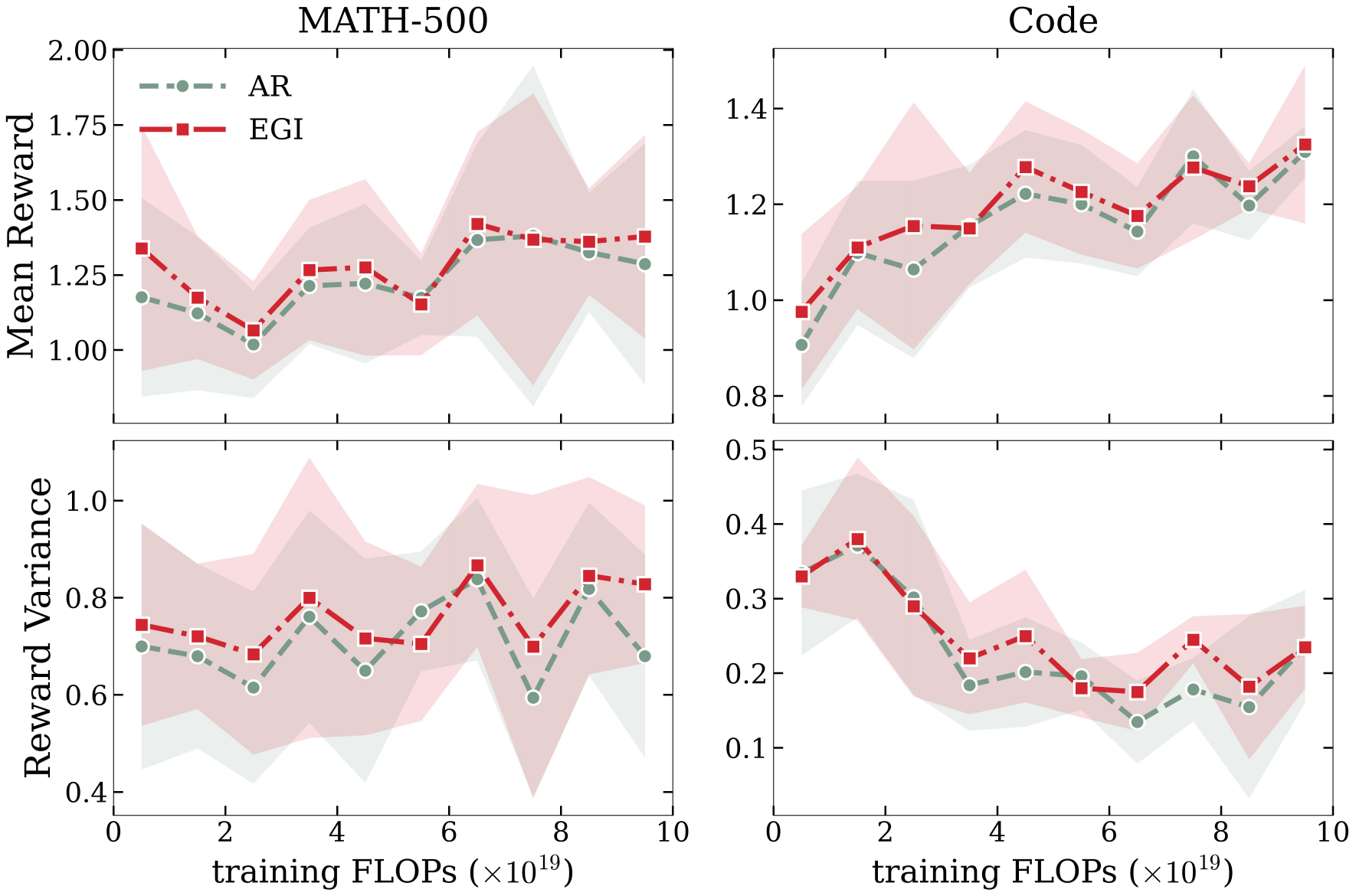}
  \vspace{-2em}
    \caption{
\textbf{Reward statistics during JustGRPO training.}
Mean rollout reward and within-group reward variance for AR and \egi{} on MATH-500 and an AceCoder-87K subset, plotted against cumulative training FLOPs.
    }
    \label{fig:grpo-dynamics-figs}
\end{wrapfigure}

We provide additional training dynamics beyond GSM8K on MATH-500 and coding tasks using a subset of AceCoder-87K.
\Cref{fig:grpo-dynamics-figs} compares the mean rollout reward and reward variance of AR and \egi{} under JustGRPO, plotted against accumulated training FLOPs.

Mean rollout reward tracks improvements in rollout quality, while within-group reward variance captures the reward differences that drive group-relative policy updates.
Across both settings, \egi{} tends to achieve higher mean rollout rewards than AR at comparable training FLOPs while maintaining within-group reward variation.
These results show a similar trend to that observed on GSM8K across additional mathematical reasoning and code generation task.

\end{document}